\newcommand{\operator}{\mathcal{T}}
\newtheorem{theorem}{Theorem}[section]
\newtheorem{proposition}[theorem]{Proposition}
\newtheorem{lemma}[theorem]{Lemma}
\newtheorem{corollary}[theorem]{Corollary}
\theoremstyle{definition}
\newtheorem{remark}[theorem]{Remark}
\newtheorem{assumption}[theorem]{Assumption}
\DeclareMathOperator*{\argmin}{arg\,min}
\title{Simulation-Based Optimistic Policy Iteration For Multi-Agent MDPs with Kullback-Leibler Control Cost}
\author{
  Khaled Nakhleh\\
  \vspace{0.25em} \\
  Electrical and Computer Engineering Department \\
  Texas A\&M University \\
  College Station, TX\\
  \texttt{\{khaled.jamal\}@tamu.edu} \\
   \And
  Ceyhun Eksin  \\
  Industrial and System Engineering Department \\
  Electrical and Computer Engineering Department \\
  Texas A\&M University \\
  College Station, TX\\
  \texttt{eksinc@tamu.edu} \\
   \And
  Sabit Ekin \\
  Engineering Technology and Industrial Distribution Department \\
  Electrical and Computer Engineering Department \\
  Texas A\&M University \\
  College Station, TX\\
  \texttt{sabitekin@tamu.edu} \\
}
\begin{document}
\maketitle

\keywords{reinforcement learning \and optimistic policy iteration \and KL control \and multi-agent MDPs \and Markov games}

\vspace{1em}
\begin{abstract} \label{sec:abstract}
This paper proposes an agent-based optimistic policy iteration (OPI) scheme for learning stationary optimal stochastic policies in multi-agent Markov Decision Processes (MDPs), in which agents incur a Kullback-Leibler (KL) divergence cost for their control efforts and an additional cost for the joint state. The proposed scheme consists of a greedy policy improvement step followed by an m-step temporal difference (TD) policy evaluation step. We use the separable structure of the instantaneous cost to show that the policy improvement step follows a Boltzmann distribution that depends on the current value function estimate and the uncontrolled transition probabilities. This allows agents to compute the improved joint policy independently. We show that both the synchronous (entire state space evaluation) and asynchronous (a uniformly sampled set of substates) versions of the OPI scheme with finite policy evaluation rollout converge to the optimal value function and an optimal joint policy asymptotically.  
Simulation results on a multi-agent MDP with KL control cost variant of the Stag-Hare game validates our scheme's performance in terms of minimizing the cost return.
\end{abstract}
\section{Introduction} \label{sec:introduction}
Consider the two-agent MDP version \cite{kappen2012optimal} of the Stag-Hare game \cite{skyrms2004stag} where two hunters move on a gridworld to hunt hares or a stag (see Figure~\eqref{fig:five_by_five_words} for an illustration).
Each hunter can only determine their own next position (local state) in this world by moving to a neighboring grid, but their costs are determined based on their joint position (state), and whether they hunt a stag or a hare. 
The hunters can individually hunt a hare, but need to coordinate together to be able to hunt a stag. 
Considering the returns are the same amongst the hunters, this setting is an example of a multi-agent Markov decision process (MDP), in which each agent has control of its local state but there is a common value function (identical interest) that depends on the joint state. 
\begin{figure}
\centering \includegraphics[width=0.4\textwidth, height=2.6in]{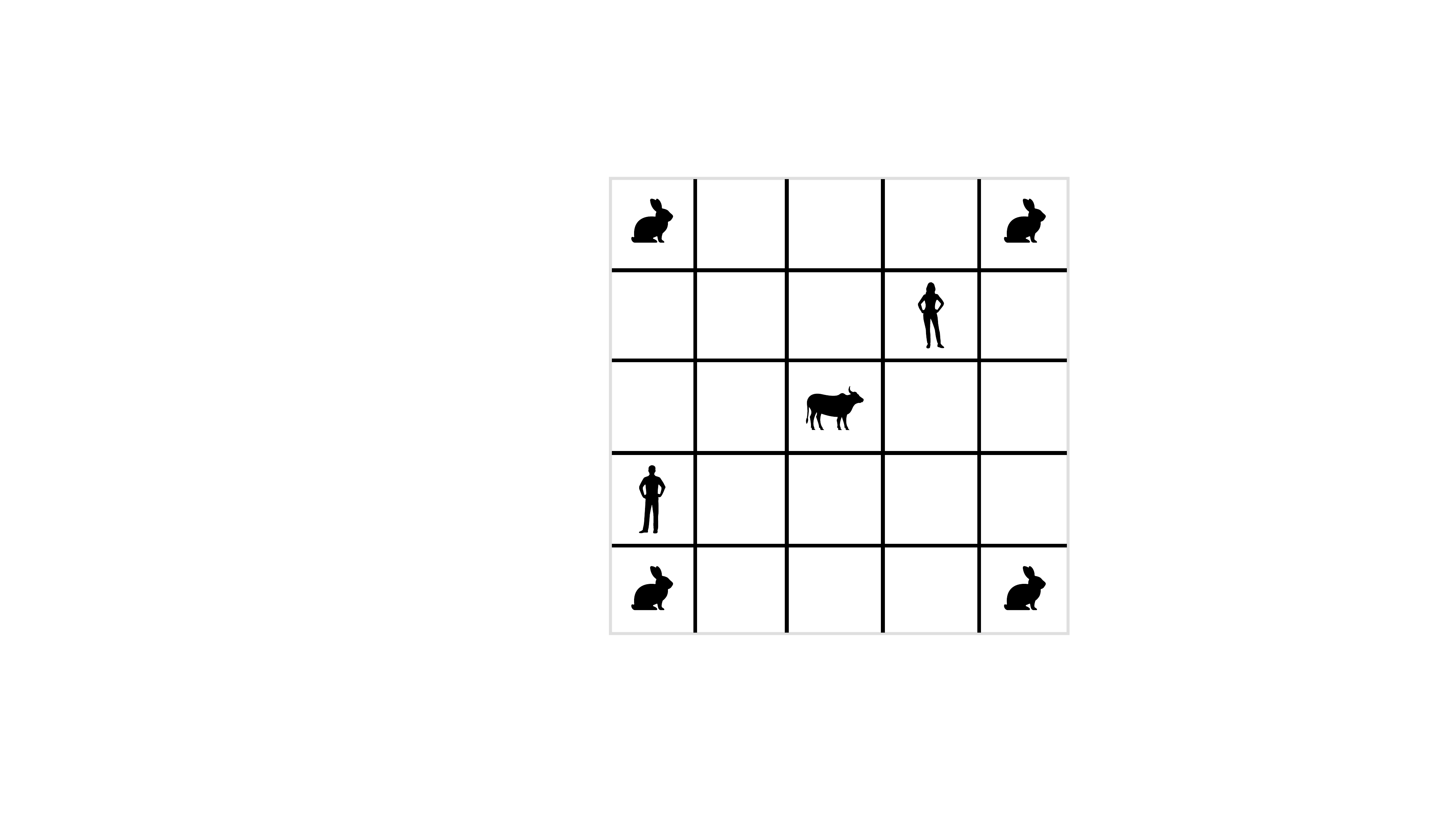}
\caption{A multi-agent MDP with KL control cost: Two hunters hunting either hares or a stag on a $5 \times 5$ gridworld. See Section \eqref{section:simulations} for details.}
\label{fig:five_by_five_words}
\vspace{-12pt}
\end{figure}

We assume the instantaneous costs are decomposed into two parts: an identical cost term that only depends on the joint state, and another term that captures the cost of control. 
The cost of control is measured by the KL divergence between the uncontrolled transition probabilities and transition probabilities collectively chosen by the agents. 
The KL cost represents the control cost the agents are willing to pay in order to modify the uncontrolled transition probability function. 
We coin the class of MDPs we consider as the multi-agent MDPs with Kullback-Leibler (KL) control cost which stem from the linearly solvable MDPs \cite{todorov2006,todorovpnas} and the adversarial linearly solvable Markov games \cite{dvijotham2012linearly}.

In the proposed Kullback-Leibler controlled optimistic policy iteration ($\texttt{KLC-OPI}$) scheme, each agent straddles between policy improvement and policy evaluation steps as is the case with the single agent OPI \cite{puterman1978}.
Optimistic policy iteration generalizes value iteration and policy iteration methods by considering a finite $m$-step temporal difference (TD) rollout in the policy evaluation step, where we obtain the value iteration method when $m=1$ and the policy iteration method when $m \rightarrow \infty$.
In the policy improvement step, the agents update their policies greedily considering their current policy valuations. 
The updated policies are then evaluated using the finite $m$-step TD rollout with sampled trajectories, where agents concurrently generate trajectories for each joint state in the joint state space, and compute the discounted returns.
The $m$-step TD rollout allows for a less noisy unbiased estimation of the policy compared to value iteration, while performing a finite rollout remains practically viable in contrast to policy iteration.

The instantaneous cost decomposition along with the infinite horizon discounted sum of the costs render a close form for the greedy optimal policies, that follow a Boltzmann distribution using a Cole-Hopf transformation of the value function (Lemma~\ref{lemma:joint_policy}). 
The resultant controlled transition probabilities selects next states inversely proportional to the exponent of the states' cost returns. 
That is, the KL control cost minimization solution results in a stochastic policy that only depends on the uncontrolled transition probability function and the current value function estimate, which eliminates the combinatorial search over the state-action space. 
Indeed, we do not need to restrict the action space to be finite, and allow the individual action spaces to be continuous. 
In addition, each agent can sample their next sub-state from its marginal controlled policy having computed the closed-form joint policy. 

For simulation-based policy iteration schemes, a synchronous policy evaluation step requires simulating a trajectory for each joint state in the state space.
We also consider the asynchronous implementation of the \texttt{KLC-OPI} where a subset of the joint state space is evaluated per iteration, potentially eliminating the need to run trajectories for each joint state.
We show that the asynchronous version of the $\texttt{KLC-OPI}$, namely $\texttt{ASYNC-KLC-OPI}$, can be related to the synchronous version of the scheme.
We analyze the asymptotic convergence of the two schemes, $\texttt{KLC-OPI}$ and $\texttt{ASYNC-KLC-OPI}$, and prove given standard assumptions on the learning rate and the initialized value functions, that the schemes' iterates asymptotically converge to the optimal value function for all agents and to an optimal joint policy (Theorem \ref{theorem:global-error-convergence} and Corollary~\ref{corollary:async-klc-opi-convergence}, respectively).
Finally, simulation results on the aforementioned multi-agent MDP variant of the Stag-Hare game show that the scheme is able to learn a joint policy that minimizes the cost return (Section~\eqref{section:simulations}).

\section{Related Work and Contributions} \label{sec:related-works}

\subsection{Simulation-Based Optimistic Policy Iteration}   
Optimistic, also known as simulation-based modified, policy iteration methods are preferred in settings with large state spaces due to their fast convergence  \cite{silver2017mastering}. However, the theoretical underpinnings of this practical success remain unclear. The pioneering theoretical guarantee for OPIs with Monte-Carlo estimation established convergence assuming infinitely long trajectories that start from each state at every iteration \cite{tsitsiklis2002}. Indeed, we follow the same proof structure here but consider finitely long trajectories. Similar, OPI schemes were considered for the stochastic shortest path problem in \cite{chen2018,liu2021}. More recent efforts focused on showing convergence of finite-step rollouts during the policy evaluation step after applying multi-step greedy or lookahead policies \cite{efroni2019,winnicki2022,winnicki2023convergence}. In particular, prior the asymptotic error bound was strengthened in \cite{winnicki2023convergence} using stochastic approximation techniques for the setting where policy evaluation is done using only a single Monte-Carlo trajectory in each iteration.
In contrast to these works, we prove asymptotic convergence to the optimal value function and an optimal joint policy without requiring a lookahead operation, i.e., a greedy improvement, while also performing a finite rollout in each iteration (Theorem \ref{theorem:global-error-convergence}). Our asymptotic convergence proof technique is applicable to any single-agent MDP with deterministic policies, not just MDPs with KL control costs.

\subsection{Decentralized Learning in Multi-Agent MDPs and Markov Games} 
The $\texttt{KLC-OPI}$ scheme considers multiple agents that perform the policy improvement step independently of other agents using an agent-local estimate of the value function (Lemma~\ref{lemma:joint_policy}). Thus, the proposed scheme falls under the framework of multi-agent reinforcement learning (MARL) \cite{zhang2021multi}, which has seen growing interest in the context of learning in identical interest or potential games~\cite{leonardos2021global,guo2023markov, unlu2023episodic}, zero-sum Markov games~\cite{sayin2021decentralized,sayin2022fictitious,brahma2022}, and multi-agent systems with KL control cost \cite{cammardella2023}. 
The considered multi-agent MDP framework is also equivalent to identical-interest Markov games or Markov cooperative games considered in \cite{leonardos2021global,ding2022independent}, where the costs/rewards are identical for all the agents.

Work on Markov games consider either policy gradients, in which agents consider parameterized policies updated using gradients computed through episodic returns \cite{leonardos2021global,aydin2023policy}, or a combination of standard learning protocols, e.g., best-response, fictitious, with a standard reinforcement learning algorithm, e.g., Q-learning \cite{sayin2022fictitious}.
Policy gradient methods are applicable to continuous state and action spaces' environments, but they suffer from large variance and the convergence rate being sensitive to the choice of parameters.

Recently, localized policy iteration methods for networked multi-agent systems \cite{zhang2023}  and zero-sum games \cite{bertsekas2021distributed} are shown to converge near globally optimal policy.  These schemes are based on state space partitioning with policy dependent mapping that gives a uniform sup-norm contraction property which enables convergence of the algorithms to the optimal value functions. 

The proposed $\texttt{KLC-OPI}$ scheme is a novel agent-based learning scheme that is shown to carry over the convergence properties of OPI designed for single-agent MDPs to the multi-agent MDP with KL control cost setting. The KL cost structure allows for continuous action spaces through the close-form solution to the policy improvement step.

\section{Multi-Agent MDPs With KL Control Cost} \label{sec:problem-setting}

We consider an infinite-horizon discounted $n$-agent MDP given as the tuple $\Gamma := \{\mathcal{N}, \mathcal{S}, \{\mathcal{A}\}_{i=1}^n,  P, \rho, C, \gamma\}$ with a finite number of players $\mathcal{N}:=\{1,\dots, n\}$, finite joint state space $\mathcal{S}$, and a continuous action space $\mathcal{A} = \mathcal{A}_{i=1} \times  \mathcal{A}_{i=2} \times \hdots \times \mathcal{A}_{i=n}$. 
The transition probability function $P:\mathcal{S} \times \mathcal{A} \to \Delta(\mathcal{S})$ determines the transition probability to the next joint state $s_{t+1}\in \mathcal{S}$ given the joint state $s_t \in \mathcal{S}$ and joint action profile $\mathbf{a}_{t}$ at time step $t\in \mathbb{N}^+$. 
The initially sampled joint state is chosen from a prior $\rho$, for which we assume $\rho(s)>0$ for all joint states $s\in \mathcal{S}$.

The intrinsic joint state cost function is defined as $C : \mathcal{S} \rightarrow \mathbb{R}$.

As done in~\cite{todorov2006} for single-agent linearly solvable MDPs, in multi-agent MDPs with KL control cost, the one-step cost function is composed of two terms: the intrinsic joint state cost function $C$, and the control cost, measured using the Kullback-Leibler (KL) divergence between the controlled and uncontrolled transition probability function.
Finally, $\gamma \in [0,1)$ is the discount factor.
\subsection{Stochastic Joint and Marginal Policies} \label{sec:multi-agent-joint-policy}
In the KL control setting, agents pick a joint policy by re-weighting $P$ with a continuous-valued action profile $\mathbf{a} \in \mathcal{A}$ that directly specifies the transition probability from $s$ to $s' \in \mathcal{S}$.
Hence, the agents avoid the combinatorial search over the state-action space for an action profile that is then applied to the transition probability function $P$.

\begin{assumption} \label{assumption:submodular} The joint state $s_t \in \mathcal{S}$ at time step $t$ is composed of $n$ sub-states where the sub-state $s_{i,t} \in \mathcal{S}_i$ can only be controlled by agent $i$ at time step $t$. The joint state is then $s_t = [s_{1,t}, s_{2,t}, \hdots, s_{n, t}]$.
\end{assumption}
Given Assumption~\ref{assumption:submodular}, the joint state space can be written as $\mathcal{S} = \mathcal{S}_1 \times \mathcal{S}_2 \times \hdots \times \mathcal{S}_n$.
Moreover, and similar to product games~\citep{flesch2008stochastic}, the multi-agent MDP with KL control cost transition structure is derived by taking the product of $n$ Markov transition structures.
Agent $i \in \mathcal{N}$ only controls their sub-state transitions through a probability transition function $P_i : \mathcal{S} \times \mathcal{A}_i \rightarrow \Delta(\mathcal{S}_i)$ such that $P = \times_{i \in \mathcal{N}} P_i$.
Agent $i$ obtains its uncontrolled transition probability function $P_{i,0} : \mathcal{S} \rightarrow \Delta(\mathcal{S}_i)$  by applying  action $a_i = 0$ such that the transition probability for any $s'_i \in \mathcal{S}_i$ and $s \in \mathcal{S}$

\begin{align}
    P_{i,0}(s'_i | s) = P_i(s'_i | s, a_i = 0).
\end{align}

\begin{assumption} \label{assumption:homogenous} Given $\Gamma$ and a joint state $s \in \mathcal{S}$, the uncontrolled transition probabilities of agents' sub-states are equal, i.e., $P_{i,0}(s_i' | s) = P_{j,0}(s_j' | s)$ for all $s_i' = s_j'$ sub-state pairs and for all agent pairs $i,j \in \mathcal{N}$.
\end{assumption}

Given Assumptions~\ref{assumption:submodular} and~\ref{assumption:homogenous}, the uncontrolled transition probability function $P_0: \mathcal{S} \rightarrow \Delta(\mathcal{S})$ can be written for any state $s\in \mathcal{S}$

\begin{align}
P_0(s' | s) = \prod\limits_{i=1}^{n} P_{i,0}(s_i' | s).
\end{align}

In this sense, when no re-weighting is made of $P$, i.e. $\mathbf{a} = \mathbf{0}$, we obtain the uncontrolled transition probability function $P_0$.
Note that, given Assumption~\ref{assumption:homogenous}, agent $i \in \mathcal{N}$ only requires the knowledge of $P_{i,0}$ for every joint state $s \in \mathcal{S}$ in order to compute $P_0$.

Let $\pi_{P_0} \in \Pi$ be the joint stochastic policy, or simply the joint policy, from the class of model-dependent stochastic policies $\Pi$ given the uncontrolled transition probability function $P_0$.
The joint policy is equal to a re-weighting of $P$ with a continuous-valued action profile such that for any $s, s' \in \mathcal{S}$, the weighted transition probability $ P(s' |s, \mathbf{a}) = \pi_{P_0}(s' | s)$. Hence, the joint policy given $P_0$ is the mapping $\pi_{P_0} : \mathcal{S} \rightarrow \Delta(\mathcal{S})$.

\begin{assumption} \label{assumption:equal-zero}
    Given any joint state pair $s, s' \in \mathcal{S}$, the joint policy $\pi_{P_0}(s' | s) = 0$ when $P_{0}(s' | s) = 0$.
\end{assumption}

Finally, the controlled sub-state transition is done using a stochastic marginal policy $\pi_{i,P_0} \in \Pi_{i}: \mathcal{S} \rightarrow \Delta(\mathcal{S}_i)$ that is derived from $\pi_{P_0}$ for all $i \in \mathcal{N}$.

\subsection{Decomposition of the One-Step Cost Function}
Similar to single-agent linearly solvable MDPs~\cite{todorov2006,todorovpnas}, the control cost is captured by the KL divergence between $\pi_{P_0}$ and the uncontrolled transition probability function $P_0$, i.e., 
\begin{align}
    D_{KL}\Big(\pi_{P_0}(\cdot | s) || P_0( \cdot | s) \Big) &:= \sum\limits_{s' \in \mathcal{S}} \pi_{P_0}( s' | s) \ln\Big(\frac{\pi_{P_0}(s' | s)}{P_0(s' | s)}\Big) =\mathop{\mathbb{E}}\limits_{s' \sim \pi_{P_0}(\cdot | s)} \Big[ \ln\Big(\frac{\pi_{P_0}(\cdot | s)}{P_0(\cdot | s)}\Big) \Big].
\end{align}
The one-step cost function is then written as $q(s, \pi_{P_0}) = C(s) + D_{KL}\big(\pi_{P_0}(\cdot | s) || P_0( \cdot | s) \big)$.
We then define the value function under $\pi_{P_0}$ to be state-wise
\begin{align} \label{equation:value-function-identical-interest-game}
    V^{\pi_{P_0}}(s) &:= \mathop{\mathbb{E}}\limits_{s' \sim \pi_{P_0}(\cdot | s)}\bigg[ \sum\limits_{t=0}^{\infty} \gamma^t \Big[C(s_t) +  D_{KL}\Big(\pi_{P_0}(\cdot | s_t) || P_0(\cdot | s_t)\Big) \Big]| s_{t=0} =s\bigg].
\end{align}
With $q_{max}$ being the maximum one-step cost, the value function is bounded by $q_{max}/ (1- \gamma)$.
Note that if the joint policy $\pi_{P_0}$ is equal to $P_0$, then the KL control cost is zero and the next joint state $s'$ is sampled according to $s' \sim P_0(\cdot | s)$, with $s' = [s_{1}', s_{2}', \hdots, s_{n}']$.
Given Assumption~\ref{assumption:equal-zero}, prohibitive transitions to another joint state in a single transition and the $D_{KL}$ control cost boundedness conditions are met.

\subsection{The KL Evaluation and KL Optimal Bellman Operators} 
We define the KL evaluation Bellman operator $\operator^{\pi_{P_0}}$ for $\pi_{P_0}$ applied state-wise to the value function as
\begin{align}  \label{equation:kl-bellman-equation}
    (\operator^{\pi_{P_0}}V)(s) &= C(s) + D_{KL}\Big(\pi_{P_0}(\cdot | s) || P_0(\cdot | s)\Big) + \gamma \sum\limits_{s'} \pi_{P_0}(s' | s)  V(s').
\end{align}
The KL {\it optimal} Bellman operator, denoted as $\operator$, is the Bellman operator given an optimal joint policy, i.e., $\operator V := \min\limits_{\pi_{P_0} \in \Pi} \operator^{\pi_{P_0}}V$. 
Then, the optimal value function $V^*$ satisfies $V^* = \operator V^*$.
Moreover, we denote the obtained optimal joint policy as $\pi^*_{P_0}$.
Both the KL evaluation and the KL optimal Bellman operators have the three properties: monotonicity, distributivity, and $\gamma-$contraction.
For any two arbitrary value functions $V_{1}, V_{2} \in \mathbb{R}^{|\mathcal{S}|}$ where $V_{1} \leq V_{2}$, $\operator^{\pi_{P_0}}$ exhibits monotonicity such that $\operator^{\pi_{P_0}} V_{1} \leq \operator^{\pi_{P_0}} V_{2}$.
In addition, for any constant $c_1 \in \mathbb{R},$ and an all-ones vector $\overline{\bf e}$, the operator $\operator^{\pi_{P_0}}$ has the distributivity property such that $\operator^{\pi_{P_0}}(V + c_1 \cdot \overline{\bf e}) = \operator^{\pi_{P_0}}V + \gamma \cdot c_1 \cdot \overline{\bf e}$.
A single application of the Bellman operator gives  $\gamma-$contractions in the $L_\infty$-Norm with $||\operator^{\pi_{P_0}}V_{1} - \operator^{\pi_{P_0}}V_{2}||_\infty \leq \gamma || V_{1} - V_{2} ||_\infty$.
\section{Multi-Agent Simulation-Based Kullback-Leibler Controlled Optimistic Policy Iteration $\texttt{KLC-OPI}$} \label{section:kl-opi-scheme}

\subsection{The Scheme Description}

The scheme is run by each agent $i \in \mathcal{N}$ on iterations $k=0,1,2,\hdots, K$ independently of other agents. 
At each iteration $k$ and for agent $i$, the scheme stores the value function estimate $V_{i,k}$ used in evaluating the joint policy.
Agent $i$ performs a greedy policy improvement step to obtain its independently calculated joint policy $\pi^{(i)}_{{P_0},k+1}$ as
\begin{align} \label{equation:greedy-step}
    \mathcal{G}(V_{i,k}) := \argmin\limits_{\pi^{(i)}_{P_0,k+1} \in \Pi} \operator^{\pi^{(i)}_{P_0,k+1}} V_{i,k} \subseteq \mathcal{S} \rightarrow \Delta(\mathcal{S}).
\end{align}

Given that the joint state can be decomposed into an agent $i \in \mathcal{N}$ sub-state $s_i$ as stated in Assumption~\ref{assumption:submodular}, agent $i$ only controls their transitions to a sub-state $s'_i \in \mathcal{S}_i$ using the marginal policy
\begin{align} \label{equation:marginal-policy}
\pi^{(i)}_{i,P_0,k+1}(s'_i | s) := \sum\limits_{j \in \mathcal{N} \backslash \{i\}}\sum\limits_{s'_j \in \mathcal{S}_j} \pi^{(i)}_{P_0,k+1}(s'_i, s'_j | s).
\end{align}
The agents collectively evaluate their value function estimates by generating $|\mathcal{S}|$ synchronous and coupled $m-$step TD sampled trajectories using their marginal policies \eqref{equation:marginal-policy} 
with each trajectory starting at a joint state $s \in \mathcal{S}$.
The term {\it synchronous} denotes evaluating the value function estimates for all joint states in the state space, while the term {\it coupled} means that the joint state transitions are done simultaneously by all agents.
The simulation-based policy evaluation step results in the per joint state cost function $q_t(s_t, \pi^{(i)}_{{P_0},k+1})$ for a timestep $t$ using the $m-$step TD trajectories. 
Each agent uses the discounted sum of the returns to obtain an estimate of $(\operator^{\pi^{(i)}_{P_0,k+1}})^m V_{i,k}$ which we represent using the noise term $\epsilon_{m,k}\in \mathbb{R}^{|\mathcal{S}|}$ as follows,
\begin{equation}\label{eq_estimate}
(\operator^{\pi^{(i)}_{P_0,k+1}})^m V_{i,k} + \epsilon_{m,k}=\sum_{t=0}^{m-1} \gamma^t q_t(s_t, \pi^{(i)}_{{P_0},k+1})+ \gamma^m V_{i,k}(s_{t=m}).
\end{equation}

Agent $i$'s value function estimate is updated using the noisy returns and results in the estimate $V_{i,k+1}$ at the end of iteration $k$.
We summarize the simulation-based $\texttt{KLC-OPI}$ scheme run by each agent $i \in \mathcal{N}$ in the following:
\begin{align} \label{scheme:kl-opi}
\texttt{KLC-OPI} \begin{cases}
  \pi^{(i)}_{P_0,k+1} &= \mathcal{G}(V_{i,k}), \\
  V_{i,k+1} &= (I - \mathrm{A}_k) V_{i,k}  + \mathrm{A}_k\Big((\operator^{\pi^{(i)}_{P_0,k+1}})^m V_{i,k} + \epsilon_{m,k} \Big),
\end{cases}
\end{align}
where $I$ is the $|\mathcal{S}| \times |\mathcal{S}|$ identity matrix, $\mathrm{A}_k$ is a $|\mathcal{S}| \times |\mathcal{S}|$ diagonal matrix with joint state learning rates $\alpha_k(s) \in \mathbb{R}^+$ as its elements, and the rollout value is $m \in \mathbb{N}^+$.
Note that $m=1$ corresponds to value iteration with KL control cost $\texttt{KLC-VI}$, and letting $m \rightarrow \infty$ gives a policy iteration with KL control cost $\texttt{KLC-PI}$ rendition of the scheme.
In addition, the learning rates' matrix $A_k$ is the same for all agents.

Similar to [Appendix 1.2 in \cite{todorovpnas}] for the single-agent linearly solvable MDP case, we show that the optimal joint policy \eqref{equation:greedy-step} follows a Boltzmann distribution by applying the Cole-Hopf transformation to the Bellman equation with $\operator^{\pi^{(i)}_{P_0,k+1}}$.
\begin{lemma} \label{lemma:joint_policy}
At iteration $k$ and for any $i \in \mathcal{N}$, the joint policy $\pi^{(i)}_{P_0,k+1}$ given $P_0$ that minimizes the discounted return of the policy evaluation step in \eqref{scheme:kl-opi} follows a Boltzmann distribution 
\begin{align} \label{equation:optimal-joint-policy}
\pi^{(i)}_{P_0,k+1}(s' | s) = \frac{P_0(s' | s)(Z_{i,k}(s'))^\gamma}{\sum\limits_{s' \in \mathcal{S}} P_0(s' | s)(Z_{i,k}(s'))^\gamma},
\end{align}
where $Z_{i,k}$ for agent $i$ is the Cole-Hopf transformation of the value function such that $Z_{i,k}(s) = e^{-V_{i,k}(s)}$ state-wise in iteration $k$. 
\end{lemma}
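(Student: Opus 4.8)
The plan is to solve the constrained minimization problem defining the greedy step $\mathcal{G}(V_{i,k})$ explicitly, exploiting the fact that the KL term makes the objective strictly convex in the policy. Fix an agent $i$, an iteration $k$, and a joint state $s \in \mathcal{S}$. Writing $\pi := \pi^{(i)}_{P_0,k+1}(\cdot \mid s)$ as a probability vector over $\mathcal{S}$, the quantity $(\operator^{\pi^{(i)}_{P_0,k+1}} V_{i,k})(s)$ equals
\begin{align*}
C(s) + \sum_{s'} \pi(s') \ln\!\Big(\frac{\pi(s')}{P_0(s'\mid s)}\Big) + \gamma \sum_{s'} \pi(s') V_{i,k}(s'),
\end{align*}
and since $C(s)$ is a constant, minimizing over $\pi$ is equivalent to minimizing $\sum_{s'} \pi(s')\big[\ln\pi(s') - \ln P_0(s'\mid s) + \gamma V_{i,k}(s')\big]$ subject to $\sum_{s'}\pi(s') = 1$ (the nonnegativity constraints will turn out to be inactive, and Assumption~\ref{assumption:equal-zero} handles the $P_0 = 0$ coordinates, which contribute nothing and are fixed to zero). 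This is precisely the standard variational problem whose solution is a Gibbs/Boltzmann measure.

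The key steps, in order: (i) introduce a Lagrange multiplier $\lambda$ for the normalization constraint and form $L(\pi,\lambda) = \sum_{s'}\pi(s')\big[\ln\pi(s') - \ln P_0(s'\mid s) + \gamma V_{i,k}(s')\big] + \lambda\big(\sum_{s'}\pi(s') - 1\big)$; (ii) set $\partial L/\partial \pi(s') = \ln \pi(s') + 1 - \ln P_0(s'\mid s) + \gamma V_{i,k}(s') + \lambda = 0$, which gives $\pi(s') = P_0(s'\mid s)\, e^{-\gamma V_{i,k}(s')}\, e^{-1-\lambda}$; (iii) substitute the Cole–Hopf transformation $Z_{i,k}(s') = e^{-V_{i,k}(s')}$, so that $e^{-\gamma V_{i,k}(s')} = (Z_{i,k}(s'))^\gamma$ and hence $\pi(s') \propto P_0(s'\mid s)(Z_{i,k}(s'))^\gamma$; (iv) fix the constant $e^{-1-\lambda}$ by imposing $\sum_{s'}\pi(s') = 1$, which yields exactly the normalizer $\sum_{s'} P_0(s'\mid s)(Z_{i,k}(s'))^\gamma$ in the denominator of \eqref{equation:optimal-joint-policy}; (v) argue that this stationary point is the unique global minimizer — the objective is strictly convex on the simplex because $\pi \mapsto \sum_{s'}\pi(s')\ln\pi(s')$ is strictly convex and the remaining terms are linear, so the stationary point of the Lagrangian is the minimizer, and since each coordinate of the candidate is strictly positive wherever $P_0(s'\mid s) > 0$, the omitted nonnegativity constraints are indeed inactive. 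Finally, note the argument holds state-wise and the objective decouples across $s$, so the per-state minimizers assemble into the claimed joint policy $\mathcal{G}(V_{i,k})$, and everything depends only on $P_0$ and $V_{i,k}$ as asserted.

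The main obstacle I anticipate is not the calculus — the Lagrangian computation is routine — but making the convexity/uniqueness argument clean in the presence of the structural assumptions: one must be careful that the minimization is genuinely unconstrained modulo normalization after invoking Assumption~\ref{assumption:equal-zero} (so that coordinates with $P_0(s'\mid s)=0$ are excluded and do not create $0\ln 0$ or $\ln 0$ pathologies), and that the resulting $\pi$ lies in the admissible class $\Pi$ with $\pi(s'\mid s)=0$ exactly when $P_0(s'\mid s)=0$. A secondary point worth stating explicitly, given the paper's emphasis on continuous action spaces, is that because $P$ factors as $\times_{i}P_i$ and the action $\mathbf{a}$ directly parametrizes the transition kernel, every target distribution $\pi(\cdot\mid s)$ absolutely continuous with respect to $P_0(\cdot\mid s)$ is realizable, so the argmin over $\Pi$ really does coincide with the unconstrained variational minimum computed above; I would include a one-line remark to that effect rather than a separate lemma.
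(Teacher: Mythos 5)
Your proposal is correct, and it reaches \eqref{equation:optimal-joint-policy} by a genuinely different route from the paper. You solve the per-state variational problem by first-order conditions: form the Lagrangian for the normalization constraint, read off the stationary point $\pi(s')\propto P_0(s'\mid s)e^{-\gamma V_{i,k}(s')}$, and then invoke strict convexity of $\pi\mapsto\sum_{s'}\pi(s')\ln\pi(s')$ to certify that this stationary point is the unique global minimizer and that the nonnegativity constraints are inactive. The paper instead ``completes the KL divergence'': it defines the normalizer $d_{i,P_0,k}(s;\gamma)=\sum_{s'}P_0(s'\mid s)(Z_{i,k}(s'))^\gamma$ and algebraically rewrites the objective as $C(s)-\ln d_{i,P_0,k}(s;\gamma)+D_{KL}\bigl(\pi(\cdot\mid s)\,\|\,P_0(\cdot\mid s)(Z_{i,k}(\cdot))^\gamma/d_{i,P_0,k}(s;\gamma)\bigr)$, so that global optimality and uniqueness follow in one line from Gibbs' inequality ($D_{KL}\ge 0$ with equality iff the arguments coincide), with no multipliers and no separate convexity or KKT verification. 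Your approach is more mechanical and makes transparent where the Boltzmann form comes from; the paper's buys a shorter certificate of optimality and, as a byproduct, the closed form $\operator V_{i,k}(s)=C(s)-\ln d_{i,P_0,k}(s;\gamma)$ for the optimal Bellman backup, which is the usual payoff of the Cole--Hopf transformation in linearly solvable MDPs. Your closing remarks on Assumption~\ref{assumption:equal-zero} (excluding the $P_0=0$ coordinates) and on the realizability of any target distribution within $\Pi$ are appropriate and, if anything, slightly more careful than the paper's own treatment.
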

\begin{proof}
We have state-wise
\begin{align}
\operator V_{i,k}(s) &= \min\limits_{\pi^{(i)}_{P_0, k+1} \in \Pi} \{ C(s) + \mathop{\mathbb{E}}\limits_{s' \sim \pi^{(i)}_{P_0, k+1}(\cdot | s)} \Big[ \ln\frac{\pi^{(i)}_{P_0, k+1}(\cdot | s)}{P_0(\cdot | s)}\Big] + \mathop{\mathbb{E}}\limits_{s' \sim \pi^{(i)}_{P_0, k+1}(\cdot | s)} \Big[\gamma V_{i,k}(\cdot) \Big] \} \nonumber \\
&= \min\limits_{\pi^{(i)}_{P_0, k+1} \in \Pi}\{C(s) + \mathop{\mathbb{E}}\limits_{s' \sim \pi^{(i)}_{P_0, k+1}(\cdot | s)} \Big[ \ln\frac{\pi^{(i)}_{P_0, k+1}(\cdot | s)}{P_0(\cdot | s)}\Big] + \mathop{\mathbb{E}}\limits_{s' \sim \pi^{(i)}_{P_0, k+1}(\cdot | s)} \Big[ \ln\frac{1}{(Z_{i,k}(\cdot))^\gamma} \Big]\} \nonumber \\ 
&= \min\limits_{\pi^{(i)}_{P_0, k+1} \in \Pi}\{C(s) +  \mathop{\mathbb{E}}\limits_{s' \sim \pi^{(i)}_{P_0, k+1}(\cdot | s)} \Big[ \ln\frac{ \pi^{(i)}_{P_0, k+1}(\cdot | s)}{P_0(\cdot | s) (Z_{i,k}(\cdot))^\gamma}\Big]\}.
\end{align}
Define the constant $$d_{i,P_0, k}(s; \gamma) := \sum\limits_{s' \in \mathcal{S}} P_0(s' | s)(Z_{i,k}(s'))^\gamma.$$ 
The value function under the joint policy that achieves the minimum discounted cost becomes 
\begin{align} \label{equation:lemma-proof-kl-cost}
    \operator V_{i,k}(s) &= \min\limits_{\pi^{(i)}_{P_0, k+1} \in \Pi} \{C(s)  + \mathop{\mathbb{E}}\limits_{s' \sim \pi^{(i)}_{P_0, k+1}(\cdot | s)} \Big[ \ln\frac{\pi^{(i)}_{P_0, k+1}(\cdot | s)}{\frac{P_0(\cdot | s)(Z_{i,k}(\cdot))^\gamma}{d_{i,P_0,k}(s; \gamma)}}\Big] - \ln d_{i,P_0,k}(s; \gamma)\} \nonumber \\
    & = \min\limits_{\pi^{(i)}_{P_0, k+1} \in \Pi} \{C(s) - \ln d_{i,P_0,k}(s; \gamma) +  D_{KL}\Big(\pi^{(i)}_{P_0, k+1}(\cdot | s) || \frac{P_0(\cdot | s)(Z_{i,k}(\cdot))^\gamma}{d_{i,P_0,k}(s; \gamma)}\Big) \}.
\end{align}
Note that the first two terms ($C(s)$ and $d_{i,P_0,k}(s; \gamma)$) do not depend on $\pi^{(i)}_{P_0,k+1}$. 
Thus, the minimum is achieved when the last term, the KL cost, in \eqref{equation:lemma-proof-kl-cost} is equal to zero.
The optimal joint policy calculated by each agent $i \in \mathcal{N}$ is then given by \eqref{equation:optimal-joint-policy}.
\end{proof}
\begin{remark} We let the initial value estimates be the same for all agents, $V_{i,0}=V_{j,0}$ for all $i, j \in \mathcal{N}$. Given the initialization, the value function estimate $V_{i,k}$ for agent $i$ and thus the joint policy $\pi^{(i)}_{P_0,k}$ computed using \eqref{equation:optimal-joint-policy} is identical to other agents' estimates for all the iterations.
\end{remark}

\begin{remark}
    The noise value $\epsilon_{m,k}$ at iteration $k$ in \eqref{eq_estimate} captures two sources of error: the error from simulating only a single trajectory per joint state instead of averaging over an infinite number of trajectories and 
the estimation error term incurred from the estimated value function of the last visited joint state in the simulated trajectory.
\end{remark}

\subsection{Asymptotic Convergence}

Our main result is that each agent's value function estimates $V_{i,k}$ and individually calculated joint policy $\pi^{(i)}_{P_0, k}$ converge to the optimal value function and an optimal joint policy respectively under the \texttt{KLC-OPI} iterations.

\begin{theorem}
\label{theorem:global-error-convergence} Assume the initial value function estimate is such that $\operator V_{i,0} -  V_{i,0} \leq \mathbf{0}$ for each agent $i \in \mathcal{N}$ and $\alpha_k(s) = \mathcal{O}(\frac{1}{k})$ state-wise. The value function estimate $V_{i,k}$ and joint policy $\pi^{(i)}_{P_0,k}$ iterates of $i\in \mathcal{N}$ for the $\texttt{KLC-OPI}$ scheme in \eqref{scheme:kl-opi} asymptotically converge to $V^*$ and to $\pi^{*}_{P_0}$, respectively.
\end{theorem}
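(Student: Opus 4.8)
\medskip
\noindent\emph{Proof plan.} The argument follows the structure of \cite{tsitsiklis2002} but with finite-length rollouts, in three steps.

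\textbf{Step 1 (reduction).} Since the initial estimates and the step-size matrices $\mathrm{A}_k$ are common to all agents, the Remark after Lemma~\ref{lemma:joint_policy} gives $V_{i,k}=V_{j,k}$ and $\pi^{(i)}_{P_0,k}=\pi^{(j)}_{P_0,k}$ for all $i,j$ and all $k$, so I will study a single sequence $V_k:=V_{i,k}$ with $V_{k+1}=(I-\mathrm{A}_k)V_k+\mathrm{A}_k\big((\mathcal{T}^{\pi_{k+1}})^m V_k+\epsilon_{m,k}\big)$ and $\pi_{k+1}=\mathcal{G}(V_k)$. Two facts will be used throughout: the greedy identity $\mathcal{T}^{\pi_{k+1}}V_k=\mathcal{T}V_k$ from \eqref{equation:greedy-step}; and that the $m$-step rollout in \eqref{eq_estimate} is, conditioned on the history $\mathcal{F}_k$ through iteration $k$, an unbiased estimate of $(\mathcal{T}^{\pi_{k+1}})^m V_k$, so $\mathbb{E}[\epsilon_{m,k}\mid\mathcal{F}_k]=\mathbf{0}$; moreover, as long as $\{V_k\}$ stays in a bounded set, finiteness of $\mathcal{S}$, boundedness of $C$, and the closed form of $\pi_{k+1}$ in Lemma~\ref{lemma:joint_policy} (which, via Assumption~\ref{assumption:equal-zero}, makes $D_{KL}(\pi_{k+1}(\cdot\mid s)\|P_0(\cdot\mid s))$ a bounded function of $V_k$) make $\{\epsilon_{m,k}\}$ uniformly bounded martingale differences.

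\textbf{Step 2 (deterministic optimistic pseudo-contraction).} I will set $\mathcal{M}:=\{V:\ V^*\le V,\ \mathcal{T}V\le V\}$, which contains $V_0$ by hypothesis (iterating $\mathcal{T}V_0\le V_0$ gives $V^*\le V_0$), and $H(V):=(\mathcal{T}^{\mathcal{G}(V)})^m V$. For $V\in\mathcal{M}$, with $\pi=\mathcal{G}(V)$, the identity $\mathcal{T}^\pi V=\mathcal{T}V\le V$ and repeated monotonicity give the nonincreasing chain $(\mathcal{T}^\pi)^{j+1}V\le(\mathcal{T}^\pi)^j V$, hence $V^*\le V^\pi\le H(V)\le\mathcal{T}V\le V$ (with $V^\pi$ the fixed point of $\mathcal{T}^\pi$, $V^\pi\ge V^*$ since $\mathcal{T}^\pi\ge\mathcal{T}$), and $\mathcal{T}H(V)\le\mathcal{T}^\pi H(V)=(\mathcal{T}^\pi)^{m+1}V\le H(V)$, so $H(\mathcal{M})\subseteq\mathcal{M}$. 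Combining distributivity $\mathcal{T}(W+c\,\overline{\bf e})=\mathcal{T}W+\gamma c\,\overline{\bf e}$ with monotonicity at $V\le V^*+\|V-V^*\|_\infty\overline{\bf e}$ gives $\mathcal{T}V\le V^*+\gamma\|V-V^*\|_\infty\overline{\bf e}$, which with $V^*\le H(V)\le\mathcal{T}V$ yields the key bound $\|H(V)-V^*\|_\infty\le\gamma\|V-V^*\|_\infty$ on $\mathcal{M}$. A short componentwise calculation (immediate for a step common to all states; for state-dependent steps it, and the remaining monotone bookkeeping, needs tracking of $\min_s\alpha_k(s)$ and $\max_s\alpha_k(s)$) shows the noiseless update $V\mapsto(I-\mathrm{A})V+\mathrm{A}H(V)$ leaves $\mathcal{M}$ invariant; then $\{V_k\}$ is componentwise nonincreasing in $\mathcal{M}$ and $\|V_{k+1}-V^*\|_\infty\le\big(1-(1-\gamma)\min_s\alpha_k(s)\big)\|V_k-V^*\|_\infty\to 0$ under the standing $\sum_k\alpha_k(s)=\infty$ condition, so $V_k\to V^*$ in the noiseless case.

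\textbf{Step 3 (stochastic iteration and policy).} I will split $V_k=X_k+Y_k$ with $Y_0=\mathbf 0$, $Y_{k+1}=(I-\mathrm{A}_k)Y_k+\mathrm{A}_k\epsilon_{m,k}$; since $\{\epsilon_{m,k}\}$ is bounded and conditionally zero-mean and $\alpha_k(s)=\mathcal{O}(1/k)$ gives $\sum_k\alpha_k(s)^2<\infty$, a standard stochastic-approximation/martingale argument yields $Y_k\to\mathbf 0$ almost surely. The residual then obeys $X_{k+1}=(I-\mathrm{A}_k)X_k+\mathrm{A}_k H_k(X_k+Y_k)$, $H_k=(\mathcal{T}^{\mathcal{G}(\cdot)})^m$ --- the noiseless iteration perturbed by the vanishing $Y_k$ inside the operator and the greedy map. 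Fixing $\delta>0$ and $K_0$ with $\|Y_k\|_\infty\le\delta$ for $k\ge K_0$, I will rerun the Step~2 estimates with $O(\delta)$ slack (using continuity of $\mathcal{G}$ and $\mathcal{T}$): $X_k$ stays $O(\delta)$-close to $\mathcal{M}$, the perturbed bound $\|H_k(X_k+Y_k)-V^*\|_\infty\le\gamma\|X_k-V^*\|_\infty+O(\delta)$ holds, and $\sum_k\alpha_k(s)=\infty$ forces $\limsup_k\|X_k-V^*\|_\infty\le c\,\delta$; letting $\delta\downarrow 0$ gives $X_k\to V^*$, hence $V_k\to V^*$. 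The a-priori boundedness used in Step~1 is obtained by running this on $\{\sup_k\|V_k\|_\infty\le R\}$ through a stopping time and sending $R\to\infty$, using that the martingale averaging keeps the reachable radius sublinear in $R$. Finally, by Lemma~\ref{lemma:joint_policy} the map $\mathcal{G}$ is the explicit, continuous-in-$V$ rule $V\mapsto\big(s'\mapsto P_0(s'\mid s)e^{-\gamma V(s')}/\sum_{s''}P_0(s''\mid s)e^{-\gamma V(s'')}\big)$ on the finite state space, so $V_{i,k}\to V^*$ implies $\pi^{(i)}_{P_0,k}\to\mathcal{G}(V^*)=\pi^{*}_{P_0}$, which is optimal since $V^*=\mathcal{T}V^*=\mathcal{T}^{\mathcal{G}(V^*)}V^*$.

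\textbf{Main obstacle.} The hard part is Step~3: reconciling the monotone/optimistic OPI structure --- available only while the iterate lies in (a neighbourhood of) $\mathcal{M}$, and already delicate for state-dependent step sizes even without noise --- with the \emph{persistent}, non-vanishing rollout noise $\epsilon_{m,k}$ produced by a single finite-length trajectory per joint state, while simultaneously establishing the a-priori boundedness of $\{V_k\}$ that makes $\epsilon_{m,k}$ bounded. Everything else is monotonicity/distributivity/contraction bookkeeping for $\mathcal{T}^{\pi_{k+1}}$ and $\mathcal{T}$ together with the closed form of Lemma~\ref{lemma:joint_policy}.
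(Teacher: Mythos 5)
Your plan is sound and reaches the same conclusion, but it is organized differently from the paper's argument. The paper first proves an asymptotic policy-improvement property, $\limsup_k(\operator V_{i,k}-V_{i,k})\le\mathbf{0}$ (Proposition~\ref{theorem:lim-sup-bound}), by upper-bounding $y_k=\operator V_{i,k}-V_{i,k}$ directly on the noisy iterates via an auxiliary recursion $U_k$; it then uses Lemma~\ref{lemma:single-step-policy-evaluation} to convert $r_{i,k}\le\delta$ into $(\operator^{\pi})^mV_{i,k}\le\operator^{\pi}V_{i,k}+\tfrac{\gamma\delta}{1-\gamma}\overline{\bf e}$ and sandwiches $V_{i,k}$ between two standard stochastic-approximation sequences, one driven by the one-step contraction $\mathcal{H}^{\pi}$ (fixed point $V^*+O(\delta)$) and one driven by the constant $V^*$. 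You instead (i) isolate the invariant set $\mathcal{M}=\{V: V^*\le V,\ \operator V\le V\}$ on which the $m$-step greedy operator is a pseudo-contraction toward $V^*$, and (ii) peel the noise off into a separately vanishing sequence $Y_k$, treating $X_k=V_k-Y_k$ as a perturbed noiseless iteration. Both routes hinge on the same two ingredients --- the monotone chain $(\operator^{\pi})^{j+1}V\le(\operator^{\pi})^{j}V$, available only when $\operator V\le V$, which is what tames the non-contractive $m$-step operator, and a $\delta\downarrow 0$ limit at the end --- so they are cousins. Your version buys a cleaner separation of noise from dynamics, and by carrying $V\ge V^*$ explicitly inside $\mathcal{M}$ it actually justifies the lower bound $(\operator^{\pi})^mV_{k}\ge V^*$ that the paper's proof asserts without comment; your closing observation that $\mathcal{G}$ is continuous by the Boltzmann closed form, so $\pi^{(i)}_{P_0,k}\to\mathcal{G}(V^*)$, is likewise a point the paper leaves implicit.

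The one place where your plan is not yet a proof is exactly the spot you flag: (approximate) invariance of $\mathcal{M}$ under the update with \emph{state-dependent} step sizes. For a scalar step $\alpha_k I$ the claim is genuinely immediate, since with $\pi=\mathcal{G}(V)$ one gets $\operator^{\pi}V'-V'=\bigl(I-\alpha_k(I-(\gamma P^{\pi})^m)\bigr)(\operator V-V)\le\mathbf{0}$ by telescoping $V-H(V)=\sum_{j=0}^{m-1}(\gamma P^{\pi})^j(V-\operator V)$. For a diagonal $\mathrm{A}_k$ the matrices $\mathrm{A}_k$ and $\gamma P^{\pi}$ do not commute, the telescoping breaks, and the leftover term $(I-\gamma P^{\pi})\mathrm{A}_k\sum_{j=0}^{m-1}(\gamma P^{\pi})^j(V-\operator V)$ need not be nonpositive componentwise. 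You must replace exact invariance by a recursive \emph{uniform} bound $\operator V_{k+1}-V_{k+1}\le u_{k+1}\overline{\bf e}$ with $u_{k+1}\le\bigl(1-(1-\gamma^m)\min_s\alpha_k(s)\bigr)u_k+O(\max_s\alpha_k(s))$ times the realized noise --- this is precisely the bookkeeping the paper's $U_k$ recursion performs in Proposition~\ref{theorem:lim-sup-bound} --- and then carry the resulting $O(\delta)$ slack through your Step~3 estimate (where the perturbation $H_k(X_k+Y_k)$ versus $H_k(X_k)$ is best bounded using the identity $\operator^{\mathcal{G}(W)}W=\operator W$ and $\gamma$-contraction of $\operator$, rather than continuity of $\mathcal{G}$). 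Once that recursion is written out the argument closes. The stopping-time construction for a-priori boundedness is unnecessary: the one-step cost is bounded and $\gamma^m<1$, so the update satisfies $\|V_{k+1}\|_\infty\le\max\bigl\{\|V_k\|_\infty,\ \tfrac{q_{max}}{(1-\gamma)(1-\gamma^m)}\bigr\}$ deterministically.
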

The convergence to optimal value function builds on two main assumptions. First one is that the initialized value function estimates $i \in \mathcal{N}$ satisfy $\operator V_{i,k=0} - V_{i,k=0} \leq \mathbf{0}$.
The  assumption is not restrictive since the initial value function estimate can be set to a large value state-wise such that a single application of the KL optimal Bellman operator will guarantee a lower value function estimate. Our second assumption on the step size is standard in stochastic approximation and ensures that the steps are square summable but not summable \cite{tsitsiklis2002,winnicki2023convergence}.

Similar to optimistic policy iteration with a deterministic policy [Proposition 1 in \cite{scherrer15}], applying the operator $\operator^{\pi^{(i)}_{P_0,k+1}}$ $m>1$ times in a single iteration of $\texttt{KLC-OPI}$ does not guarantee a contraction in any norm nor monotonic improvement of the value function estimate, unlike exact policy iteration schemes where the monotonicity property is preserved \cite{bertsekas1996neuro, munos2003error}. Hence, the $\texttt{KLC-OPI}$ simulation-based policy evaluation step may result in a worse performing value function estimate, i.e., a larger discounted cost return, compared to the previous iteration's estimate. 

Given the non-contracting, and in general without additional assumptions, non-monotonic improvement challenges of optimistic policy iteration schemes' updates, the updated value functions' performance has to be bounded using an analysis technique that does not utilize the contraction property.
To prove asymptotic convergence to the optimal value function, we show that the scheme's iterative updates are upper and lower bounded by two standard stochastic approximation processes that we later show both converge to the optimal value function for each agent.

We begin by defining the filtration that captures the algorithm's history up to and before the noise $\epsilon_{m,k}$ at the end of iteration $k$ is realized,
\begin{align}
    \mathcal{F}_{m,k} := \{(\epsilon_{m,k'})_{k' \leq k-1}\}. 
\end{align}
Given the filtration, it follows that the noise term has zero mean, $\mathbb{E}{[\epsilon_{m,k} | \mathcal{F}_{m,k}]} = \mathbf{0}$, and bounded variance $\mathbb{E}[||\epsilon_{m,k}||_\infty | \mathcal{F}_{m,k}] \leq \frac{q_{max} \cdot \overline{\mathbf{e}}}{1 - \gamma}$ state-wise.

Next, we state the upper bound on the policy evaluation step performance in a single iteration which is a restatement of  
Lemma 2.c in~\cite{tsitsiklis2002} with our notation.

\begin{lemma} [Lemma 2.c in~\cite{tsitsiklis2002}]\label{lemma:single-step-policy-evaluation} For the policy evaluation step at iteration $k$ in the $\texttt{KLC-OPI}$ scheme, and given the estimate of agent $i \in \mathcal{N}$ value function $V_{i,k}$, we have $(\operator^{\pi^{(i)}_{P_0,k+1}})^m V_{i,k} \leq V_{i,k} + \frac{r_{i,k}\cdot\overline{\mathbf{e}}}{1 - \gamma} $, where $r_{i,k} := \max\limits_{s\in\mathcal{S}}\Big[ \operator^{\pi^{(i)}_{P_0,k+1}} V_{i,k}(s) - V_{i,k}(s) \Big]$ and $\overline{\mathbf{e}}$ is an all-ones vector.
\end{lemma}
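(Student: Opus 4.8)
The plan is to exploit the three structural properties of $\operator^{\pi^{(i)}_{P_0,k+1}}$ stated at the end of Section~3 --- monotonicity, distributivity, and the $\gamma$-contraction in the sup-norm --- to control the iterated operator $(\operator^{\pi^{(i)}_{P_0,k+1}})^m$ in terms of a single application. Write $\operator_k := \operator^{\pi^{(i)}_{P_0,k+1}}$ and $V := V_{i,k}$ for brevity, and recall $r_{i,k} = \max_{s}\bigl[(\operator_k V)(s) - V(s)\bigr]$, so that by definition $\operator_k V \leq V + r_{i,k}\,\overline{\mathbf{e}}$ pointwise. The key observation is that this one-step inequality can be bootstrapped: applying $\operator_k$ to both sides, using monotonicity and then distributivity, gives $\operator_k^2 V \leq \operator_k(V + r_{i,k}\overline{\mathbf{e}}) = \operator_k V + \gamma r_{i,k}\overline{\mathbf{e}} \leq V + r_{i,k}\overline{\mathbf{e}} + \gamma r_{i,k}\overline{\mathbf{e}} = V + (1+\gamma) r_{i,k}\overline{\mathbf{e}}$.

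First I would make this into an induction on $j$: the claim is $\operator_k^{j} V \leq V + \bigl(\sum_{\ell=0}^{j-1}\gamma^\ell\bigr) r_{i,k}\,\overline{\mathbf{e}}$ for every $j \geq 1$. The base case $j=1$ is the definition of $r_{i,k}$. For the inductive step, apply $\operator_k$ to the bound at stage $j$; by monotonicity the inequality is preserved, and by distributivity the additive constant vector $\bigl(\sum_{\ell=0}^{j-1}\gamma^\ell\bigr) r_{i,k}\overline{\mathbf{e}}$ picks up a factor $\gamma$, yielding $\operator_k^{j+1} V \leq \operator_k V + \gamma\bigl(\sum_{\ell=0}^{j-1}\gamma^\ell\bigr) r_{i,k}\overline{\mathbf{e}} \leq V + r_{i,k}\overline{\mathbf{e}} + \bigl(\sum_{\ell=1}^{j}\gamma^\ell\bigr) r_{i,k}\overline{\mathbf{e}} = V + \bigl(\sum_{\ell=0}^{j}\gamma^\ell\bigr) r_{i,k}\overline{\mathbf{e}}$, which closes the induction.

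Finally, specializing to $j = m$ and using $\sum_{\ell=0}^{m-1}\gamma^\ell = \frac{1-\gamma^m}{1-\gamma} \leq \frac{1}{1-\gamma}$ (valid since $\gamma\in[0,1)$), we get $\operator_k^m V \leq V + \frac{1-\gamma^m}{1-\gamma} r_{i,k}\overline{\mathbf{e}} \leq V + \frac{r_{i,k}}{1-\gamma}\overline{\mathbf{e}}$, which is exactly the asserted bound. I do not anticipate a genuine obstacle here: the only subtlety is the sign of $r_{i,k}$ --- if $r_{i,k} < 0$ (which is precisely the regime enforced by the initialization hypothesis $\operator V_{i,0} - V_{i,0} \leq \mathbf{0}$ in Theorem~\ref{theorem:global-error-convergence}), one should note that $\frac{1-\gamma^m}{1-\gamma}r_{i,k} \geq \frac{1}{1-\gamma}r_{i,k}$ fails, so the final relaxation step must be stated carefully; but the bound with the coefficient $\frac{1-\gamma^m}{1-\gamma}$ holds regardless of sign, and relaxing $\frac{1-\gamma^m}{1-\gamma}$ up to $\frac{1}{1-\gamma}$ is legitimate whenever $r_{i,k}\geq 0$, which is the only case where this weaker form is needed in the sequel. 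Since this lemma is cited as a restatement of Lemma~2.c in~\cite{tsitsiklis2002}, I would keep the write-up to the short induction above.
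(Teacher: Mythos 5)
Your induction is correct and is essentially the standard argument behind the cited result; note that the paper itself does not prove this lemma but imports it as a restatement of Lemma~2.c of Tsitsiklis and Van Roy, so there is no in-paper proof to compare against. The chain $\operator_k V \leq V + r_{i,k}\overline{\mathbf{e}}$, then monotonicity plus the distributivity property $\operator_k(V + c\,\overline{\mathbf{e}}) = \operator_k V + \gamma c\,\overline{\mathbf{e}}$ applied inductively, gives exactly $(\operator_k)^m V \leq V + \frac{1-\gamma^m}{1-\gamma}\, r_{i,k}\,\overline{\mathbf{e}}$, and both properties are explicitly available for the KL evaluation operator since it is affine with nonnegative weights. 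Your sign caveat is also well taken and is a genuine (if minor) imprecision in the lemma as stated: for $r_{i,k} < 0$ the relaxation $\frac{1-\gamma^m}{1-\gamma} r_{i,k} \leq \frac{r_{i,k}}{1-\gamma}$ reverses, so only the tighter $\frac{1-\gamma^m}{1-\gamma}$ form is unconditionally valid. This is harmless where the lemma is invoked in the proof of Theorem~\ref{theorem:global-error-convergence}, because there $r_{i,k}$ is immediately replaced by an upper bound $\delta > 0$, and $\frac{1-\gamma^{m}}{1-\gamma}\,\delta \leq \frac{\delta}{1-\gamma}$ holds for positive $\delta$ regardless of the sign of $r_{i,k}$ itself.
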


We now prove that the $\texttt{KLC-OPI}$ policy evaluation step output in \eqref{scheme:kl-opi} is upper bounded by a standard stochastic approximation process which results in the asymptotic improvement property $\limsup\limits_{k \rightarrow \infty}\operator V_{i,k} - V_{i,k} \leq \mathbf{0}$ for each agent $i \in \mathcal{N}$.

\begin{proposition} \label{theorem:lim-sup-bound}
If $\operator V_{i,k=0}-V_{i,k=0}\leq \mathbf{0}$ and the learning rate state-wise is $\alpha_k(s) = \mathcal{O}(\frac{1}{k})$ such that $0 \leq \alpha_k(s) \leq 1$ for any iteration $k$, then the policy evaluation step for any $m\in \mathbb{N}^+$ in \eqref{scheme:kl-opi} gives 
\begin{equation} \label{eq_asymptotic_improvement}
\limsup\limits_{k \rightarrow \infty}  \operator V_{i,k} - V_{i,k} \leq \mathbf{0}.
\end{equation}
\end{proposition}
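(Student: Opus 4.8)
The plan is to follow the stochastic-approximation route used for single-agent OPI but to get around the fact --- already noted after Theorem~\ref{theorem:global-error-convergence} --- that the $m$-step optimistic update is neither a sup-norm contraction nor monotone. The device is to linearize: because $\pi^{(i)}_{P_0,k+1}=\mathcal G(V_{i,k})$ is greedy, $\operator^{\pi^{(i)}_{P_0,k+1}}V_{i,k}=\operator V_{i,k}$, and $\operator^{\pi^{(i)}_{P_0,k+1}}$ is affine, so the error $h_k:=\operator V_{i,k}-V_{i,k}$ obeys a recursion that is a genuine $\gamma^m$-pseudo-contraction on the scalar $\max_s h_k(s)$, up to a martingale-difference noise that I then average out. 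Throughout write $\pi_{k+1}:=\pi^{(i)}_{P_0,k+1}$, let $\Pi_{k+1}$ be the row-stochastic matrix with entries $\pi_{k+1}(s'|s)$, and note that since $\mathcal S$ is finite, \eqref{eq_asymptotic_improvement} is equivalent to $\limsup_{k\to\infty}\max_s h_k(s)\le 0$.

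\textbf{The pseudo-contraction.} I would first record two telescopings. With $M_k:=\sum_{j=0}^{m-1}(\gamma\Pi_{k+1})^j\ge\mathbf 0$, the identity $(\operator^{\pi_{k+1}})^{j+1}V_{i,k}-(\operator^{\pi_{k+1}})^{j}V_{i,k}=(\gamma\Pi_{k+1})^{j}h_k$ gives $(\operator^{\pi_{k+1}})^m V_{i,k}-V_{i,k}=M_k h_k$ (this also re-derives Lemma~\ref{lemma:single-step-policy-evaluation}, since the row sums of $M_k$ are $\le 1/(1-\gamma)$). Hence the increment of the scheme is $\Delta_k:=V_{i,k+1}-V_{i,k}=\mathrm A_k M_k h_k+\mathrm A_k\epsilon_{m,k}$, and using greediness once more, $\operator V_{i,k+1}\le\operator^{\pi_{k+1}}V_{i,k+1}=\operator V_{i,k}+\gamma\Pi_{k+1}\Delta_k$, so
\[
h_{k+1}\ \le\ h_k+(\gamma\Pi_{k+1}-I)\Delta_k .
\]
For a common step size $\alpha_k(s)\equiv\alpha_k$ the matrices commute and the second telescoping $(\gamma\Pi_{k+1}-I)M_k=(\gamma\Pi_{k+1})^m-I$ collapses this to
\[
h_{k+1}\ \le\ (1-\alpha_k)h_k+\alpha_k(\gamma\Pi_{k+1})^m h_k+\eta_k,\qquad \eta_k:=\alpha_k(\gamma\Pi_{k+1}-I)\epsilon_{m,k},
\]
where, since $\Pi_{k+1}$ is $\mathcal F_{m,k}$-measurable and $\mathbb E[\epsilon_{m,k}\mid\mathcal F_{m,k}]=\mathbf 0$, the $\eta_k$ form a martingale-difference sequence with $\sum_k\mathbb E[\|\eta_k\|_\infty^2\mid\mathcal F_{m,k}]<\infty$ a.s.\ (using $\sum_k\alpha_k^2<\infty$, $0\le\alpha_k\le1$, and a.s.\ boundedness of the iterates, discussed below).

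\textbf{Averaging the noise and concluding.} Introduce the auxiliary process $\phi_0=\mathbf 0$, $\phi_{k+1}=(1-\alpha_k)\phi_k+\eta_k$; by the standard square-summable-martingale argument (cf.\ \cite{tsitsiklis2002}), $\phi_k\to\mathbf 0$ a.s. Subtracting, $\tilde h_k:=h_k-\phi_k$ satisfies $\tilde h_{k+1}\le(1-\alpha_k)\tilde h_k+\alpha_k(\gamma\Pi_{k+1})^m(\tilde h_k+\phi_k)$ with no residual noise term, and since $((\gamma\Pi_{k+1})^m v)(s)\le\gamma^m\max_{s'}v(s')$ for every $v$, the scalar $\tilde H_k:=\max_s\tilde h_k(s)$ obeys
\[
\tilde H_{k+1}\ \le\ (1-\alpha_k(1-\gamma^m))\,\tilde H_k+\alpha_k\gamma^m\|\phi_k\|_\infty .
\]
With $1-\gamma^m\in(0,1]$, $\sum_k\alpha_k=\infty$, and vanishing driving term $\gamma^m\|\phi_k\|_\infty\to 0$, a standard deterministic comparison lemma gives $\limsup_k\tilde H_k\le 0$, hence $\limsup_k\max_s h_k(s)\le\limsup_k(\tilde H_k+\|\phi_k\|_\infty)\le 0$, which is \eqref{eq_asymptotic_improvement}. (The hypothesis $\operator V_{i,0}-V_{i,0}\le\mathbf 0$ enters only through the boundedness bookkeeping below and is kept for consistency with Theorem~\ref{theorem:global-error-convergence}.)

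\textbf{Main obstacle.} The crux is precisely the displayed $\gamma^m$-pseudo-contraction for $\tilde H_k$: an $m$-step optimistic step with step sizes is not itself contractive, and simply iterating $\operator$ only yields a crude $h_{k+1}\le h_k+\mathcal O(\alpha_k)+\|\mathrm A_k\epsilon_{m,k}\|$ that is not summably decaying; recovering a contraction rate needs the two telescopings, which rely on greediness (so $\operator^{\pi_{k+1}}V_{i,k}=\operator V_{i,k}$) and affineness of $\operator^{\pi_{k+1}}$. Two secondary points require care. (i) A.s.\ boundedness of $\{V_{i,k}\}$, needed so that $\epsilon_{m,k}$ (and hence $\eta_k$) have bounded conditional second moment and $\phi_k\to\mathbf 0$: this follows from standard stochastic-approximation stability because $(\operator^{\pi})^m V$ contracts $\|V\|_\infty$ into a bounded ball of radius $\mathcal O(q_{max}/(1-\gamma))$ while the additive noise is square summable, and is immediate if $V_{i,0}$ is initialized at that radius (which also yields $\operator V_{i,0}\le V_{i,0}$). (ii) State-dependent step sizes $\alpha_k(s)=\mathcal O(1/k)$, where $\mathrm A_k$ and $\Pi_{k+1}$ no longer commute: here I would not telescope exactly but bound $(\gamma\Pi_{k+1}-I)\mathrm A_k M_k h_k$ componentwise using $(M_k)_{ss'}\ge 0$, $\sum_{s'}(M_k)_{ss'}\le 1/(1-\gamma)$, and $\alpha_k(s)\le\alpha_k^{\max}=\mathcal O(1/k)$, which reproduces the $\tilde H_k$ recursion with an additional $\mathcal O(\alpha_k^{\max})$ contribution absorbed into the (still vanishing) driving sequence.
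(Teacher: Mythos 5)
Your proof is correct, and up to the key recursion it is the same derivation as the paper's: both use greediness (so $\operator^{\pi^{(i)}_{P_0,k+1}}V_{i,k}=\operator V_{i,k}$) and affineness of $\operator^{\pi^{(i)}_{P_0,k+1}}$ to obtain $h_{k+1}\le(I-\mathrm A_k)h_k+\mathrm A_k(\gamma\Pi_{k+1})^m h_k+\mathrm A_k(\gamma\Pi_{k+1}-I)\epsilon_{m,k}$ for $h_k:=\operator V_{i,k}-V_{i,k}$; in the paper this is \eqref{eq_yk1}, with the last term written as $(\operator^{\pi^{(i)}_{P_0,k+1}})^{m+1}V_{i,k}-(\operator^{\pi^{(i)}_{P_0,k+1}})^{m}V_{i,k}=(\gamma\Pi_{k+1})^m y_{i,k}$. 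Where you genuinely diverge is in closing this recursion. The paper inductively builds an upper-bounding process $U_k$ that accumulates past noise and then asserts that the resulting sequence $X_{k+1}=(I-\mathrm A_k)X_k+\mathrm A_k(w_k+\gamma^m U_k)$ is a ``standard stochastic sequence'' converging to zero; note that $\gamma^m U_k$ is $\mathcal F_{m,k}$-measurable, not a martingale difference, so that assertion needs exactly the decomposition you make explicit. You instead subtract the pure noise-averaging process $\phi_k$ (which vanishes by the square-summable-martingale argument) and reduce the remainder to a scalar recursion with contraction factor $1-\alpha_k(1-\gamma^m)$ and a vanishing driving term, finishing with a deterministic comparison lemma. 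This buys a cleaner and slightly stronger result: your argument shows the hypothesis $\operator V_{i,0}-V_{i,0}\le\mathbf 0$ is not actually needed for the asymptotic claim, since $\prod_k\bigl(1-\alpha_k(1-\gamma^m)\bigr)\to 0$ washes out the initial condition. You also correctly flag that the exact telescoping $(\gamma\Pi_{k+1}-I)\mathrm A_kM_k=\mathrm A_k\bigl((\gamma\Pi_{k+1})^m-I\bigr)$ requires $\mathrm A_k$ to commute with $\Pi_{k+1}$, i.e., a state-independent learning rate --- a point the paper silently assumes when it moves $\mathrm A_0$ past $\gamma\pi^{(i)}_{P_0,1}$ in \eqref{eq_TP1V1} --- and your componentwise repair for state-dependent $\alpha_k(s)$ is a sensible fix. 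The two remaining caveats you note (a.s.\ boundedness of the iterates so the noise has bounded conditional variance, and reading $\alpha_k(s)=\mathcal O(1/k)$ as ``square summable but not summable'') are shared with, and no worse than, the paper's own treatment.
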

\begin{proof}
Note that $\operator^{\pi^{(i)}_{P_0,k+1}}$ is an affine transformation such that $\operator^{\pi^{(i)}_{P_0,k+1}} V_{i,k} = q_i^{\pi^{(i)}_{P_0,k+1}} + \gamma \pi^{(i)}_{P_0,k+1} V_{i,k}$.
We then have the following for $k=1$, using the policy evaluation update rule in \eqref{scheme:kl-opi},

\begin{align}
\operator^{\pi^{(i)}_{P_0,1}} V_{i,1} &= q_i^{\pi^{(i)}_{P_0,1}}  +\gamma \pi^{(i)}_{P_0,1}\Big[(I - \mathrm{A}_{0}) V_{i,0} + \mathrm{A}_{0} \Big( (\operator^{\pi^{(i)}_{P_0,1}})^m V_{i,0} + \epsilon_{m,0} \Big)  \Big] \nonumber \\
&= q_i^{\pi^{(i)}_{P_0,1}} + \gamma \pi^{(i)}_{P_0,1} V_{i,0} - \mathrm{A}_{0} \gamma \pi^{(i)}_{P_0,1}V_{i,0} + \mathrm{A}_{0}\gamma \pi^{(i)}_{P_0,1}  (\operator^{\pi^{(i)}_{P_0,1}})^m V_{i,0} + \mathrm{A}_{0} \gamma \pi^{(i)}_{P_0,1}\epsilon_{m,0},
\end{align}
which we rewrite as follows using the fact that $(\operator^{\pi^{(i)}_{P_0,1}})^{m+1}V_{i,0}=\operator^{\pi^{(i)}_{P_0,1}}(\operator^{\pi^{(i)}_{P_0,1}})^m V_{i,0} = q_i^{\pi^{(i)}_{P_0,1}} + \gamma \pi^{(i)}_{P_0,1} (\operator^{\pi^{(i)}_{P_0,1}})^m V_{i,0}$, and by reorganizing the terms,  
\begin{align} \label{eq_TP1V1}
\operator^{\pi^{(i)}_{P_0,1}} V_{i,1} &=(I - \mathrm{A}_{0}) (\operator^{\pi^{(i)}_{P_0,1}} V_{i,0}) + \mathrm{A}_{0} \Big((\operator^{\pi^{(i)}_{P_0,1}})^{m+1} V_{i,0} + \gamma \pi^{(i)}_{P_0,1} \epsilon_{m,0} \Big). 
\end{align}
We know that the optimal Bellman operator is going to be better, i.e., $\operator V_{i,1} \leq \operator^{\pi^{(i)}_{P_0,1}} V_{i,1}$.
Let $y_{i,k}:= \operator V_{i,k} - V_{i,k}$. 
Subtracting $V_{i,1}$ from both sides of \eqref{eq_TP1V1}, and using the fact that $\operator V_{i,k} = \operator^{\pi^{(i)}_{P_0,k+1}} V_{i,k}$ due to the greedy step in \eqref{scheme:kl-opi}, we have
\begin{align}
y_{i,1} &\leq (I - \mathrm{A}_{0}) (\operator V_{i,0}) + \mathrm{A}_{0} \Big((\operator^{\pi^{(i)}_{P_0,1}})^{m+1} V_{i,0} + \gamma \pi^{(i)}_{P_0,1} \epsilon_{m,0} \Big) - (I - \mathrm{A}_{0}) V_{i,0} - \mathrm{A}_{0} \Big( (\operator^{\pi^{(i)}_{P_0,1}})^m V_{i,0} + \epsilon_{m,0} \Big) \nonumber \\ 
&= (I - \mathrm{A}_{0}) y_{i,0} + \mathrm{A}_{0} \Big[ (\gamma \pi^{(i)}_{P_0,1} - I) \epsilon_{m,0} \Big] + \mathrm{A}_{0} \Big( (\operator^{\pi^{(i)}_{P_0,1}})^{m+1} V_{i,0} - (\operator^{\pi^{(i)}_{P_0,1}})^{m} V_{i,0} \Big). \label{eq_yk1}
\end{align}
Since we assume that $\operator V_{i,0} \leq V_{i,0}$, i.e., $y_{i,0} \leq \mathbf{0}$, then applying the operator $m$ times on both sides gives $(\operator^{\pi^{(i)}_{P_0,1}})^{m+1} V_{i,0} - (\operator^{\pi^{(i)}_{P_0,1}})^{m} V_{i,0} \leq \mathbf{0}$. Thus we obtain the upper bound
\begin{align}
y_{i,1} \leq \mathrm{A}_{0} \Big[( \gamma \pi^{(i)}_{P_0,1} - I) \epsilon_{m,0}\Big]. 
\end{align}
Define $w_{k} := ( \gamma \pi^{(i)}_{P_0,k+1} - I) \epsilon_{m,k}$ where $w_k$ satisfies the same properties as $\epsilon_{m,k}$, i.e., zero-mean and bounded variance, then $y_{i,1} \leq \mathrm{A}_{0} w_{0} = U_{1}$ where $U_{1}$ is the upper bound on $y_{i,1}$.
Similarly, we have the following upper bound for $k=2$ 
\begin{align} \label{eq_y_k2_bound}
y_{i,2}& \leq (I - \mathrm{A}_{1}) y_{i,1} + \mathrm{A}_{1} w_{1} + \mathrm{A}_{1} \Big( (\operator^{\pi^{(i)}_{P_0,2}})^{m+1} V_{i,1} - (\operator^{\pi^{(i)}_{P_0,2}})^{m} V_{i,1}  \Big).
\end{align}
The upper bound on $y_{i,1}\leq U_{1}$ implies $\operator^{\pi^{(i)}_{P_0,2}} V_{i,1} \leq V_{i,1} + U_{1}$. When we apply the operator $\operator^{\pi^{(i)}_{P_0,k+1}}$ to both sides of the inequality for $k=2$, we obtain the following using the distributivity property $\operator^{\pi^{(i)}_{P_0,k+1}} (V_{i,k} + c \cdot \overline{\mathbf{e}}) = \operator^{\pi^{(i)}_{P_0,k+1}} V_{i,k} + \gamma \cdot c \cdot \overline{\mathbf{e}}$ for some constant $c\in \mathbb{R}$, 
\begin{align}
    (\operator^{\pi^{(i)}_{P_0,2}})^2 V_{i,1} &\leq (\operator^{\pi^{(i)}_{P_0,2}}) \Big( V_{i,1} + U_{1} \Big) \nonumber \\
    &= (\operator^{\pi^{(i)}_{P_0,2}}) V_{i,1} + \gamma U_{1}.
\end{align}
Applying $\operator^{\pi^{(i)}_{P_0,2}}$ $m$ times gives $(\operator^{\pi^{(i)}_{P_0,2}})^{m+1} V_{i,1} \leq (\operator^{\pi^{(i)}_{P_0,2}})^m V_{i,1} + \gamma^m U_{1}.$ Then, we can replace the last term in \eqref{eq_y_k2_bound} with $\gamma^m U_{1}$ to get the following bound
\begin{align}
y_{i,2} &\leq (I - \mathrm{A}_{1}) U_{1} + \mathrm{A}_{1} w_{1} + \mathrm{A}_{1} \gamma^m U_{1} \nonumber \\
&= \Big( I + \mathrm{A}_{1} (\gamma^mI - I )\Big) U_{1} + \mathrm{A}_{1} w_{1}.
\end{align}
We let $U_{2}:=( I + \mathrm{A}_{1} (\gamma^mI - I )) U_{1} + \mathrm{A}_{1} w_{1}.$
Repeating the above steps for $k = 3, 4, \hdots, K$ we obtain the upper bound for any $k$
\begin{align}
(\operator^{\pi^{(i)}_{P_0,k+1}})^{m+1} V_{i,k} - (\operator^{\pi^{(i)}_{P_0,k+1}})^m V_{i,k} &\leq \gamma^m U_{k} \nonumber \\
&= \gamma^m \Big[\Big(I + \mathrm{A}_{k-1}(\gamma^mI - I)\Big) U_{k-1}+ \mathrm{A}_{k-1} w_{k-1} \Big],
\end{align}
with $U_{0} = \mathbf{0}$. 
Since the noise is zero-mean with bounded variance, $\gamma \leq 1$, and $\alpha_k(s) = \mathcal{O}(1/k)$ with $0 \leq \alpha_k(s) \leq 1$ state-wise for any $k$, we have that 
\begin{align} \label{equation:limit}
 &\lim\limits_{k \rightarrow \infty} \alpha_k \Big[ (\operator^{\pi^{(i)}_{P_0,k+1}})^{m+1} V_{i,k} - (\operator^{\pi^{(i)}_{P_0,k+1}})^m V_{i,k}\Big] \leq \gamma^m \lim\limits_{k \rightarrow \infty} \frac{U_{k}}{k} = \mathbf{0}.
\end{align}

Define a noise function as $f(w_k, \gamma^m, U_{k}) := w_k + \gamma^m U_{k}$, then we have for any $i \in \mathcal{N}$ and $k$
\begin{align}
y_{i,k+1} \leq (I - \mathrm{A}_k) y_{i,k} + \mathrm{A}_k f(w_k, \gamma^m, U_{k}).
\end{align}
Define another sequence $X_{k+1} = (I - \mathrm{A}_k) X_{k} + \mathrm{A}_k f(w_k, \gamma^m, U_{k})$ with $X_{0} = y_{i,0}$ for any $i \in \mathcal{N}$. We have that $y_{i,k} \leq X_k$.
Given \eqref{equation:limit} and $f(w_k, \gamma^m, U_{k})$ is a zero-mean noise function, the sequence $\{X_k\}_{0,1,\dots}$ is a standard stochastic  sequence that converges to zero in the limit.
Finally, we have $\lim\limits_{k \rightarrow \infty} X_k = \mathbf{0}$ and since $y_{i,k} \leq X_k$, we obtain $\limsup\limits_{k \rightarrow \infty} y_{i,k} \leq \mathbf{0}$.
\end{proof}

Establishing the asymptotic policy improvement property of \eqref{scheme:kl-opi} is the key contribution of our proof approach. This result is similar to the property obtained in \cite{tsitsiklis2002} for showing convergence of OPI with infinitely long single trajectories. However, here we consider finitely long trajectories which is in agreement with practical implementation. In the proof, we overcome the error introduced by finite-trajectories by upper bounding the noisy and finite rollout with a term that only depends on the discount factor, learning rates, the joint policy from the previous iteration, and the realized noise up to iteration $k$.
Moreover, we achieve this property using greedy policy improvements \eqref{equation:greedy-step}, i.e., without the need to perform a lookahead operation for each iteration as is done in \cite{winnicki2023convergence, winnicki2021role}.

We are ready to provide the proof of  Theorem~\ref{theorem:global-error-convergence}. 

\begin{proof}[Proof (Theorem~\ref{theorem:global-error-convergence})]
Given the asymptotic improvement guarantee in \eqref{eq_asymptotic_improvement}, 
for every $\delta > 0$, there exists an iteration $k(\delta)$ such that $r_{i,k} \leq \delta$ for $k \geq k(\delta)$ where we recall that $r_{i,k} = \max\limits_{s\in\mathcal{S}} [\operator^{\pi^{(i)}_{P_0, k+1}} V_{i,k} - V_{i,k}]$.
Using the result from Lemma~\ref{lemma:single-step-policy-evaluation}, we have $(\operator^{\pi^{(i)}_{P_0,k+1}})^{m-1} V_{i,k} \leq V_{i,k} + \frac{r_{i,k} \cdot \overline{\mathbf{e}}}{1 - \gamma}$.
Then applying the operator on both sides gives $(\operator^{\pi^{(i)}_{P_0,k+1}})^{m} V_{i,k} \leq \operator^{\pi^{(i)}_{P_0,k+1}} V_{i,k} + \frac{\gamma \cdot r_{i,k} \cdot \overline{\mathbf{e}}}{1 - \gamma} \leq \operator^{\pi^{(i)}_{P_0,k+1}} V_{i,k} + \frac{\gamma \cdot \delta \cdot \overline{\mathbf{e}}}{1 - \gamma}$ for $k \geq k(\delta)$.

We then write the update rule from \eqref{scheme:kl-opi} for $k \geq k(\delta)$ as
\begin{align}
    V_{i,k+1} &= (I - \mathrm{A}_k) V_{i,k} + \mathrm{A}_k \Big( (\operator^{\pi^{(i)}_{P_0,k+1}})^m V_{i,k} + \epsilon_{m,k} \Big) \nonumber \\
    &\leq (I - \mathrm{A}_k) V_{i,k} + \mathrm{A}_k \Big( \operator^{\pi^{(i)}_{P_0,k+1}} V_{i,k} + \frac{\gamma \cdot \delta \cdot \overline{\mathbf{e}}}{1 - \gamma} + \epsilon_{m,k} \Big).
\end{align}
For every positive integer $\bar k$, define the random variable sequence 
\begin{align} \label{equation:second-sequence}
W_{i,k+1,\delta}^{(\bar k)} = (I - \mathrm{A}_k) W_{i,k,\delta}^{(\bar k)} + \mathrm{A}_k \Big(\operator^{\pi^{(i)}_{P_0,k+1}} W_{i,k,\delta}^{(\bar k)} + \frac{\gamma \cdot \delta \cdot \overline{\mathbf{e}}}{1 - \gamma} + \epsilon_{m,k} \Big),
\end{align} 
for all $k \geq \bar k$.
In addition, let $A^{(\bar k)}$ be the event that $k(\delta) = \bar k$.
Then for any $\bar k$, any sample path in $A^{(\bar k)}$, and for all $k \geq \bar k$, we have $V_{i,k+1} \leq W^{(\bar k)}_{i,k+1,\delta}$.
Next, define the mappings $\mathcal{H} : \mathbb{R}^{|\mathcal{S}|} \rightarrow \mathbb{R}^{|\mathcal{S}|}$ and $\mathcal{H}^{\pi^{(i)}_{P_0, k+1}} : \mathbb{R}^{|\mathcal{S}|} \rightarrow \mathbb{R}^{|\mathcal{S}|}$ such that  $\mathcal{H}^{\pi^{(i)}_{P_0, k+1}} V_{i,k} = \operator^{\pi^{(i)}_{P_0,k+1}} V_{i,k} + \frac{\gamma \cdot \delta \cdot \overline{\mathbf{e}}}{1 - \gamma}$.
Rewriting the sequence in Equation~\eqref{equation:second-sequence} using $\mathcal{H}^{\pi^{(i)}_{P_0,k+1}}$ gives
\begin{equation} \label{equation:third-sequence}
W_{i,k+1,\delta}^{(\bar k)} = (I - \mathrm{A}_k) W_{i,k,\delta}^{(\bar k)} + \mathrm{A}_k \Big(\mathcal{H}^{\pi^{(i)}_{P_0,k+1}}W_{i,k,\delta}^{(\bar k)} + \epsilon_{m,k} \Big).
\end{equation}

Since a single application of $\operator^{\pi^{(i)}_{P_0,k+1}}$ is a contraction mapping, a single application of $\mathcal{H}^{\pi^{(i)}_{P_0, k+1}}$ is also a contraction mapping with a unique fixed point $W^*_{\delta} = V^* + \frac{\gamma \cdot \delta \cdot \overline{\mathbf{e}}}{(1 - \gamma)^2}$.
Given that $\epsilon_{m,k}$ is a zero-mean noise, the sequence in Equation~\eqref{equation:third-sequence} converges almost surely to $W^*_{\delta}$ for sample paths in $A^{(\bar k)}$.
Since the union of the events $A^{(\bar k)}$ is the entire sample space, we have $\limsup\limits_{k \rightarrow \infty} V_{i,k} \leq \limsup\limits_{k \rightarrow \infty} W_{i,k,\delta}^{(\bar k)} = W_{\delta}^*$.
Given that $\delta$ can be chosen arbitrarily close to zero, then $\limsup\limits_{k \rightarrow \infty} V_{i,k} \leq V^*$.
Similarly, since $(\operator^{\pi^{(i)}_{P_0,k+1}})^m V_{i,k} \geq V^*$, we have 
\begin{align}
    V_{i,k+1} &= (I - \mathrm{A}_k) V_{i,k} + \mathrm{A}_k \Big((\operator^{\pi^{(i)}_{P_0,k+1}})^{m} V_{i,k}  + \epsilon_{m,k} \Big) \nonumber \\
    &\geq (I - \mathrm{A}_k) V_{i,k} + \mathrm{A}_k \Big(V^*  + \epsilon_{m,k} \Big).
\end{align}
Define another random variable sequence $B_{i,k+1} = (I - \mathrm{A}_k) B_{i,k} + \mathrm{A}_k \Big(V^*  + \epsilon_{m,k} \Big)$ with $B_{i,k=0} = V_{i,k=0}$.
The sequence $B_{i,k}$ converges asymptotically to $V^*$ and we have $\liminf\limits_{k \rightarrow \infty} V_{i,k} \geq V^*$.
Given the asymptotic upper and lower convergence bounds to $V^*$, the update rule in \eqref{scheme:kl-opi} also converges to the optimal value function $V^*$ for each $i \in \mathcal{N}$ and we obtain an optimal joint policy $\pi^{*}_{P_0}$ using \eqref{equation:optimal-joint-policy}.
\end{proof}
Note that we do not use any structural feature of multi-agent MDPs with KL control cost in the proof, and thus the convergence result for Theorem \ref{theorem:global-error-convergence} holds for general MDPs. 

\begin{figure*}[ht]
\begin{center} 
\subfigure[$\texttt{ASYNC-KLC-OPI}$ training performance.]{\includegraphics[width=0.32\textwidth, height=1.8in]{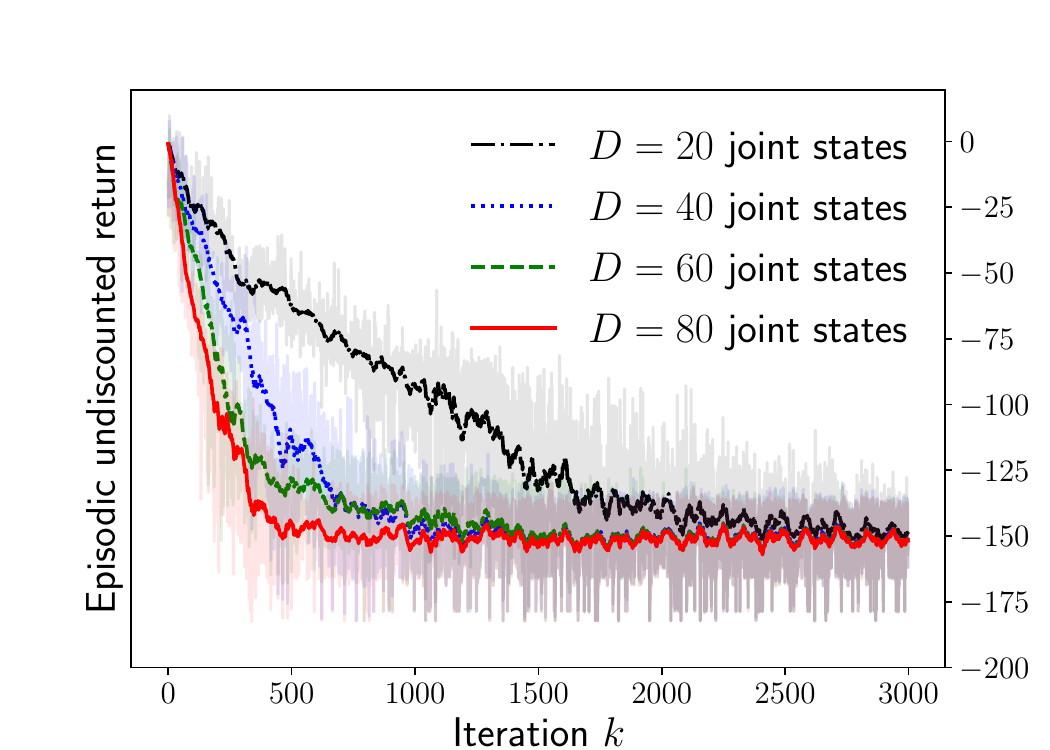}\label{fig:identical-interest-training}}
\subfigure[Comparison against the optimal deterministic joint policy with $D=80$.]{\includegraphics[width=0.32\textwidth, height=1.8in]{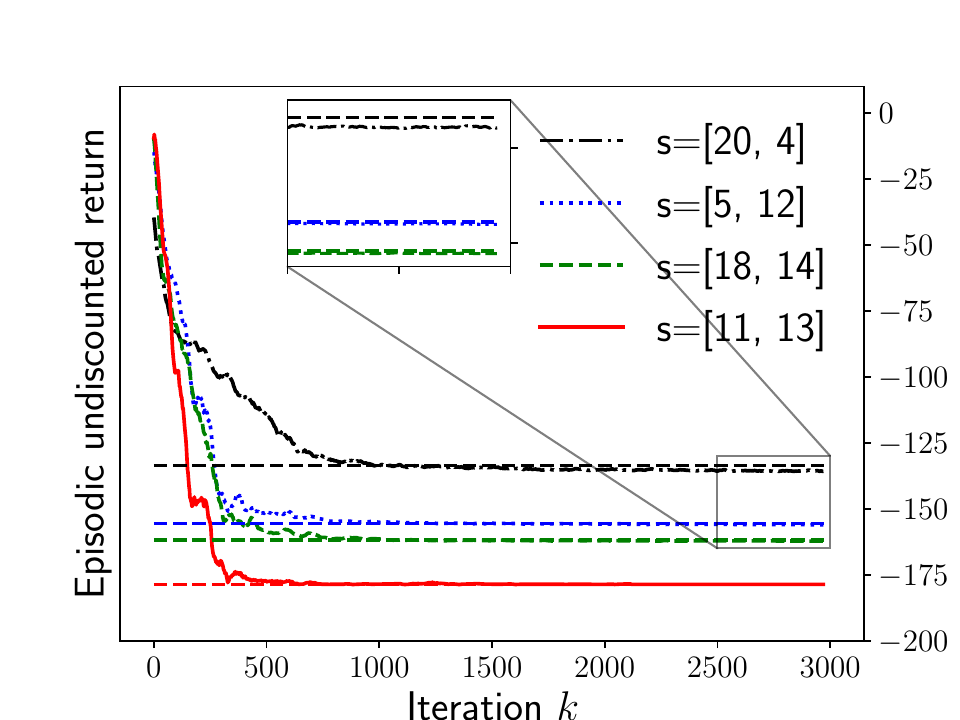}\label{fig:identical-interest-evaluation-80}} 
\subfigure[$||V^k_{i=1} -  V_{i=1}^K||_\infty$.]{\includegraphics[width=0.32\textwidth, height=1.8in]{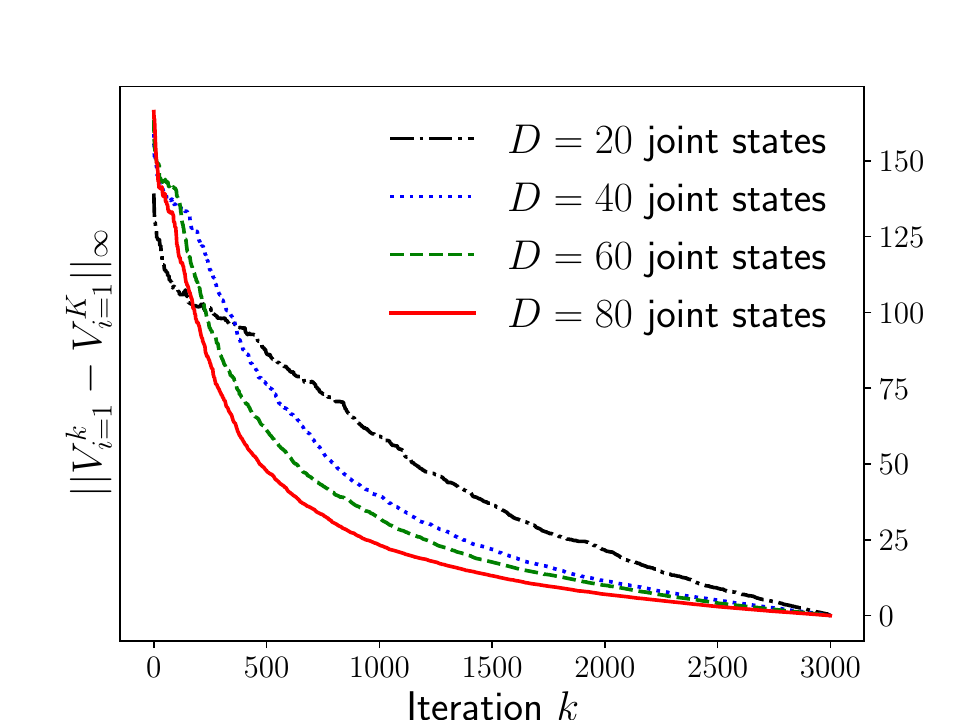}\label{fig:identical-interest-norm}} 
\end{center}
\caption{$\texttt{ASYNC-KLC-OPI}$ performance on the multi-agent MDP with KL control cost variant of the Stag-Hare game averaged over $10$ simulation runs.}
\label{fig:identical-interest-main-text}
\end{figure*}

\section{The Asynchronous $\texttt{KLC-OPI}$ Scheme} \label{sec:async-klc-opi}

For simulation-based policy iteration schemes, it can be computationally expensive to run a synchronous policy evaluation step with $|\mathcal{S}|$ $m-$step TD sampled trajectories especially if the joint state space is large.
For this reason, we focus on the asynchronous policy evaluation step of value function estimates using $1 \leq D \leq |\mathcal{S}|$ joint states in each iteration $k$.
In a single iteration, a larger $D$ value means that a larger subspace of the joint state space is evaluated at the expense of additional sampling per iteration.

In order to prove the convergence of the asynchronous version of the scheme, we assume that the initial state distribution $\rho$ is a uniform distribution.
Let $\mathcal{D}_k \subseteq \mathcal{S}$ be the set of joint states that are evaluated at iteration $k$ under the current joint policy $\pi^{(i)}_{P_0,k+1}$ for any $i \in \mathcal{S}$.
The set $\mathcal{D}_k$ of each iteration satisfies the following assumption.
\begin{assumption} \label{assumption:fixed-cardinality}
    The cardinality of the sampled joint states' set  $D = |\mathcal{D}_k|$ is a constant with the set having unique joint state elements in each iteration $k=0,1,2,\hdots, K$.
\end{assumption}
Assumption~\ref{assumption:fixed-cardinality} ensures that no joint state is evaluated more than once per iteration, and that the number of evaluated joint states is fixed in the proposed asynchronous policy evaluation step.

Similar to the synchronous policy evaluation step in~\eqref{scheme:kl-opi}, each joint state $s \in \mathcal{D}_k$ is evaluated using its independently simulated $m-$step TD trajectory and its corresponding value function estimate is updated at the end of the current iteration. 
For joint states that are not in the set $\mathcal{D}_k$, their value function estimates are kept the same.
The asynchronous $\texttt{KLC-OPI}$ scheme is then 
\begin{align} \label{scheme:async-kl-opi}
\texttt{ASYNC-KLC-OPI} \begin{cases}
  \pi^{(i)}_{P_0,k+1} &= \mathcal{G}(V_{i,k}), \\
  V_{i,k+1}(s) &= \big(1 - \alpha_k(s)\big) V_{i,k}(s) + \alpha_k(s)\Big((\operator^{\pi^{(i)}_{P_0,k+1}})^m V_{i,k}(s) + \epsilon_{m,k}(s) \Big) \hspace{1,5em} \forall s \in \mathcal{D}_k, \\
  V_{i,k+1}(s) &= V_{i,k}(s) \hfill \forall s \notin \mathcal{D}_k.
\end{cases}
\end{align}
Note that the policy improvement step taken by each agent is with respect to all joint state elements of the current value function estimate regardless of whether a joint state's value estimate was evaluated or not. 
We can re-write the policy evaluation step in~\eqref{scheme:async-kl-opi} as follows
\begin{align}
    V_{i,k+1}(s) = &\mathcal{I}_{s \in \mathcal{D}_k} \Big[\big(1 - \alpha_k(s)\big)V_{i,k}(s) + \alpha_k(s) \big[(\operator^{\pi^{(i)}_{P_0,k+1}})^m V_{i,k}(s) + \epsilon_{m,k}(s)\big]\Big] + \mathcal{I}_{s \notin \mathcal{D}_k} \Big[ V_{i,k}(s)\Big],
\end{align}
where $\mathcal{I}_{s \in \mathcal{D}_k}$ is the indicator function that specifies if the joint state $s$ is evaluated in iteration $k$ or not.
In the following, we establish the same asymptotic convergence guarantee as in Theorem~\ref{theorem:global-error-convergence} for the iterates in \texttt{ASYNC-KLC-OPI}. 
\begin{corollary} \label{corollary:async-klc-opi-convergence}
Assume that the initial value function is such that $\operator V_{i,0} - V_{i,0} \leq \mathbf{0}$ for each agent $i \in \mathcal{N}$, $\alpha_k(s) = \mathcal{O}(\frac{1}{k})$ state-wise, and that the initial joint state distribution $\rho$ is a uniform distribution, then the $\texttt{ASYNC-KLC-OPI}$ scheme in~\eqref{scheme:async-kl-opi} value function $V_{i,k}$ and joint policy $\pi^{(i)}_{P_0,k}$ iterates asymptotically converge to $V^*$ and to $\pi^{*}_{P_0}$ for all $i\in \mathcal{N}$, respectively. 
\end{corollary}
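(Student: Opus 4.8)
The plan is to recognize \texttt{ASYNC-KLC-OPI} as an instance of \texttt{KLC-OPI} driven by a random, history-dependent learning-rate matrix, and then replay the proof of Theorem~\ref{theorem:global-error-convergence}. Define the effective step size $\tilde\alpha_k(s):=\mathcal{I}_{s\in\mathcal{D}_k}\,\alpha_k(s)$ and the diagonal matrix $\tilde{\mathrm{A}}_k:=\mathrm{diag}\big(\tilde\alpha_k(s)\big)_{s\in\mathcal{S}}$. Then the three-case update in \eqref{scheme:async-kl-opi} collapses to
\[
V_{i,k+1} = (I-\tilde{\mathrm{A}}_k)V_{i,k} + \tilde{\mathrm{A}}_k\Big((\operator^{\pi^{(i)}_{P_0,k+1}})^m V_{i,k} + \epsilon_{m,k}\Big),
\]
which is exactly \eqref{scheme:kl-opi} with $\mathrm{A}_k$ replaced by $\tilde{\mathrm{A}}_k$ (on the coordinates $s\notin\mathcal{D}_k$ the factor $\tilde\alpha_k(s)=0$ reproduces $V_{i,k+1}(s)=V_{i,k}(s)$). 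Hence it suffices to check that $\tilde{\mathrm{A}}_k$ retains the properties of $\mathrm{A}_k$ used in Proposition~\ref{theorem:lim-sup-bound} and Theorem~\ref{theorem:global-error-convergence}.

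Those properties are: (i) $0\le\tilde\alpha_k(s)\le 1$, immediate from $0\le\tilde\alpha_k(s)\le\alpha_k(s)\le 1$; (ii) $\tilde\alpha_k(s)=\mathcal{O}(1/k)$, again since $\tilde\alpha_k(s)\le\alpha_k(s)=\mathcal{O}(1/k)$; and (iii) the Robbins–Monro conditions $\sum_k\tilde\alpha_k(s)^2<\infty$ and $\sum_k\tilde\alpha_k(s)=\infty$ almost surely, for every $s\in\mathcal{S}$. Square-summability follows from $\tilde\alpha_k(s)^2\le\alpha_k(s)^2$. For the divergence I would use that, by the uniformity of $\rho$ together with Assumption~\ref{assumption:fixed-cardinality}, the subsampling is exchangeable over $\mathcal{S}$, so $\mathbb{P}(s\in\mathcal{D}_k)=D/|\mathcal{S}|>0$ for every $s$; assuming the $\mathcal{D}_k$ are drawn i.i.d.\ across iterations, $\{\mathcal{I}_{s\in\mathcal{D}_k}\}_k$ is an i.i.d.\ sequence of Bernoulli variables with success probability $D/|\mathcal{S}|$ that is independent of the deterministic weights $\alpha_k(s)$, and since $\sum_k\alpha_k(s)=\infty$ with $\alpha_k(s)\to 0$, a Kolmogorov three-series / second-moment argument gives $\sum_k\mathcal{I}_{s\in\mathcal{D}_k}\alpha_k(s)=\infty$ a.s. Finally I would enlarge the filtration to $\mathcal{F}_{m,k}\vee\sigma(\mathcal{D}_k)$; since $\mathcal{D}_k$ is fixed before the $m$-step trajectories are generated, $\epsilon_{m,k}$ stays zero-mean with bounded variance conditional on this enlarged filtration, and $\tilde{\mathrm{A}}_k$ is measurable with respect to it.

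With (i)–(iii) in hand, the remainder is a transcription of the synchronous argument with $\tilde{\mathrm{A}}_k$ in place of $\mathrm{A}_k$. Lemma~\ref{lemma:single-step-policy-evaluation} is a deterministic identity about $(\operator^{\pi})^m$ and is unchanged. In Proposition~\ref{theorem:lim-sup-bound} the telescoping upper bound $U_k$ is now built from $\tilde{\mathrm{A}}_{k-1}$ and $w_{k-1}$, but it still satisfies $U_k/k\to\mathbf{0}$ because $\tilde\alpha_k=\mathcal{O}(1/k)$ and the recursion governing $U_k$ has effective contraction rate $1-\gamma^m>0$; hence $\limsup_{k\to\infty}(\operator V_{i,k}-V_{i,k})\le\mathbf{0}$. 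In the proof of Theorem~\ref{theorem:global-error-convergence} the sandwiching sequences $W^{(\bar k)}_{i,k,\delta}$ and $B_{i,k}$ become asynchronous stochastic-approximation iterations driven by zero-mean noise whose drift maps are the contraction $\mathcal{H}^{\pi^{(i)}_{P_0,k+1}}$ (fixed point $V^*+\tfrac{\gamma\delta\overline{\mathbf{e}}}{(1-\gamma)^2}$) and the constant map $V\mapsto V^*$, so both converge almost surely under the Robbins–Monro conditions on $\tilde{\mathrm{A}}_k$. Letting $\delta\downarrow 0$ yields $\lim_{k\to\infty}V_{i,k}=V^*$ for every $i\in\mathcal{N}$, and then $\pi^{(i)}_{P_0,k}\to\pi^*_{P_0}$ by continuity of the Boltzmann map \eqref{equation:optimal-joint-policy} in $V_{i,k}$.

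The main obstacle is property (iii): guaranteeing that the random subsampling never starves a joint state of updates, i.e.\ $\sum_k\tilde\alpha_k(s)=\infty$ a.s. This is exactly what forces the uniformity of $\rho$ and Assumption~\ref{assumption:fixed-cardinality}, and it also requires pinning down the probabilistic model generating $\mathcal{D}_k$ (i.i.d.\ uniform $D$-subsets of $\mathcal{S}$) and re-verifying that $\epsilon_{m,k}$ remains a martingale difference after enlarging the filtration by $\sigma(\mathcal{D}_k)$. Everything downstream is bookkeeping on top of the already-established synchronous result.
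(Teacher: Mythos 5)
Your high-level strategy (recast \texttt{ASYNC-KLC-OPI} as a synchronous iteration and replay Theorem~\ref{theorem:global-error-convergence}) matches the paper's, but the specific decomposition you choose creates a gap. You fold the sampling indicator into the step size, producing the random diagonal matrix $\tilde{\mathrm{A}}_k=\mathrm{diag}\big(\mathcal{I}_{s\in\mathcal{D}_k}\,\alpha_k(s)\big)$, and then assert that the rest is a transcription of Proposition~\ref{theorem:lim-sup-bound}. But that proposition's proof hinges on commuting the step-size matrix past the policy's transition matrix: the passage from $\operator^{\pi^{(i)}_{P_0,1}}V_{i,1}=q_i^{\pi^{(i)}_{P_0,1}}+\gamma\pi^{(i)}_{P_0,1}\big[(I-\mathrm{A}_0)V_{i,0}+\mathrm{A}_0(\cdots)\big]$ to the form \eqref{eq_TP1V1} requires $\pi^{(i)}_{P_0,1}\mathrm{A}_0=\mathrm{A}_0\pi^{(i)}_{P_0,1}$, which holds when the step-size matrix is a scalar multiple of the identity but fails for your $\tilde{\mathrm{A}}_k$, whose diagonal entries are $\alpha_k(s)$ on sampled states and $0$ elsewhere. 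This is exactly the obstruction the paper's proof is built to avoid: it keeps the step-size matrix deterministic and scalar by writing the update with $\mathcal{A}_k=\mathrm{A}_k H$, where $H=\mathbb{E}[\mathcal{X}_k]$ is constant and equal across states (thanks to uniform $\rho$ and Assumption~\ref{assumption:fixed-cardinality}), and dumps the deviation $H^{-1}\mathcal{X}_k-I$ into an enlarged zero-mean, bounded-variance noise $g_{m,k}$. With that decomposition the commutation survives and Proposition~\ref{theorem:lim-sup-bound} really does transcribe; with yours it does not, and you would need a genuinely asynchronous stochastic-approximation argument (componentwise, sup-norm based) rather than "bookkeeping."

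The parts of your proposal that are sound are worth keeping: the observation that the indicator representation collapses the three-case update into a single equation, the enlargement of the filtration by $\sigma(\mathcal{D}_k)$ so that $\epsilon_{m,k}$ remains a martingale difference, and the second-moment argument showing $\sum_k\mathcal{I}_{s\in\mathcal{D}_k}\alpha_k(s)=\infty$ almost surely. The last point is in fact handled more cheaply by the paper's route, since there the effective step size $\alpha_k(s)h(s)$ is deterministic and inherits non-summability directly. To repair your proof, either switch to the paper's decomposition (deterministic expected step size plus augmented noise) or supply a separate argument for the recursion on $y_{i,k}=\operator V_{i,k}-V_{i,k}$ that does not require $\tilde{\mathrm{A}}_k$ to commute with $\pi^{(i)}_{P_0,k+1}$.
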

\vspace{-1em}
\begin{proof}
We can write the policy evaluation update rule of the asynchronous scheme in \eqref{scheme:async-kl-opi} similar fashion to the synchronous case \eqref{scheme:kl-opi}, 
\begin{align}
    V_{i,k+1} = (I - \mathrm{A}_k H)V_{i,k} + \mathrm{A}_k H \Big( (\operator^{\pi^{(i)}_{P_0,k+1}})^m V_{i,k} + g_{m,k} \Big),
\end{align}
with $H$ being the $|\mathcal{S}| \times |\mathcal{S}|$ diagonal joint state policy evaluation probability matrix, i.e. the matrix $H$ element $h(s) > 0$ state-wise is the stationary probability that the joint state $s$ is sampled according to.
The added noise function $g_{m,k}$ is given by
\begin{align} \label{equation:g-noise-function}
    g_{m,k} := \quad & \epsilon_{m,k} + \Big((H^{-1}\mathcal{X}_k - I) \cdot (-V_{i,k} + (\operator^{\pi^{(i)}_{P_0,k+1}})^m V_{i,k} + \epsilon_{m,k}) \Big),
\end{align}
with $\mathcal{X}_k$ being a $|\mathcal{S}| \times |\mathcal{S}|$ diagonal matrix with Bernoulli random variable elements $x_k(s)$ state-wise at iteration $k$.
The random variable $x_k(s) = 1$ if the joint state $s$ is evaluated during iteration $k$, and it is zero otherwise.
Since we assume that the initial state distribution $\rho$ is a uniform distribution in the asynchronous case, the expected value of the Bernoulli random variables is $\mathbb{E}[ x_k(s) | \mathcal{F}_{m,k}] = h(s)$ state-wise.
We then have $H^{-1} \cdot \mathbb{E}[\mathcal{X}_k | \mathcal{F}_{m,k}] = H^{-1} \cdot H = I$, which means that $g_{m,k}$ is also a zero-mean noise function such that $\mathbb{E} [g_{m,k} | \mathcal{F}_{m,k}] = \mathbf{0}$.
Similar to the noise function $\epsilon_{m,k}$, the added noise function $g_{m,k}$ variance is also bounded since the value function estimate and $(\operator^{\pi^{(i)}_{P_0,k+1}})^m V_{i,k}$ in Equation~\eqref{equation:g-noise-function} are bounded by the value $\frac{q_{max}}{1 - \gamma}$ state-wise.

Given Assumption~\ref{assumption:fixed-cardinality}, the set's cardinality $|\mathcal{D}_k|$ is a constant throughout the iterations, then the diagonal matrix $H$ has equal and constant elements throughout the iterations $k=0,1,2,\hdots, K$. 
In other words, the matrix elements $h(s) = h(s')$ for all $(s,s') \in \mathcal{S}$ pairs.
Define $\mathcal{A}_k = \mathrm{A}_k \cdot H$ as the product of the diagonal learning rate matrix $\mathrm{A}_k$ and the diagonal joint state policy evaluation probability matrix $H$.
We can then write the policy evaluation step as 
\begin{align}
    V_{i,k+1} = \big( I - \mathcal{A}_k\big)V_{i,k} + \mathcal{A}_k \big( (\operator^{\pi^{(i)}_{P_0,k+1}})^m V_{i,k} + g_{m,k}\big).
\end{align}
Since $H$ has constant elements throughout the iterations, then $\mathcal{A}_k$ is a matrix that commutes with the joint policy $\pi^{(i)}_{P_0,k+1}$ in each iteration such that $\mathcal{A}_k \cdot \pi^{(i)}_{P_0,k+1} = \pi^{(i)}_{P_0,k+1} \cdot \mathcal{A}_k$. 
Since $\mathcal{A}_k \cdot \pi^{(i)}_{P_0,k+1}$ satisfy the commutative property and the added noise function $g_{m,k}$ also has zero mean with bounded variance, we can directly use the same steps as in Proposition~\ref{theorem:lim-sup-bound} to show that $\lim\sup\limits_{k \rightarrow \infty} \operator V_{i,k} - V_{i,k} \leq \mathbf{0}$ for each agent $i \in \mathcal{N}$.
The convergence to the optimal value function $V^*$ for each agent and to an optimal joint policy $\pi^{*}_{P_0}$ follows the same steps as in Theorem~\ref{theorem:global-error-convergence}.
\end{proof}

\section{Simulations} \label{section:simulations}

\subsection{A Multi-Agent MDP: Stag-Hare}
We consider a two-agent MDP with KL control cost variant~\cite{kappen2012optimal} of the Stag-Hare game \cite{skyrms2004stag}. 
There are $|\mathcal{N}| = 2$ hunters who hunt on a $5 \times 5$ gridworld and can only move to adjacent grids in a single timestep.

Hunter $i$'s sub-state is their grid location $s_i = 0, 1, \hdots, 24$ such that there are $|\mathcal{S}_i| = 25$ possible grid locations.
For the two hunters, this gives a total of $|\mathcal{S}| = |\mathcal{S}_1| \times |\mathcal{S}_2| = 625$ joint states.
The uncontrolled transition probability $P_{i,0}$ forces agent $i \in \mathcal{N}$ to remain in their current sub-state w.p. 0.9, and transition to one of the $b$ adjacent sub-states w.p. $0.1/b$.  

The gridworld has four hares and one stag at sub-state locations $s_h \in \{0, 4, 20, 24\}$ and $s_s \in \{12\}$, respectively. 
When one of the hunters reaches a hare's location, both hunters obtain a negative intrinsic joint state cost of $-2$ added to the KL control cost per timestep.
If both hunters cooperate and move together to the stag's location, they would obtain a lower intrinsic joint state cost of $-10$ added to the KL control cost per timestep.
The intrinsic joint state cost incurred for each agent $i \in \mathcal{N}$ in a single transition is 
\begin{align}
    C(s) = -2 \cdot \sum\limits_{i=1}^{n} \mathcal{I}\{s_i \in s_h\} - 10 \cdot \mathcal{I}\Big\{ \Big[\sum\limits_{i=1}^{n} \mathcal{I}\{s_i \in s_s\}\Big] > 1 \Big\},
\end{align}
where $\mathcal{I}\{\cdot\}$ being the indicator function.

\subsection{$\texttt{ASYNC-KLC-OPI}$ Simulation Results}
We set the discount factor to be $\gamma = 0.95$, and the fixed episode time horizon $m$ to be the average of a geometrically distributed random variable with distribution $\texttt{geom}(1-\gamma)$ such that $m=20$.
We test the scheme using $D = [20, 40, 60, 80]$ joint states for $K = 3000$ iterations and show averaged results over $10$ simulation runs in Figure~\eqref{fig:identical-interest-main-text}.

In Figure~\ref{fig:identical-interest-training}, $\texttt{ASYNC-KLC-OPI}$ converges faster to a minimum cost return as the number of sampled joint states $D$ in a single iteration $K$ increases. 
With a larger $D$ value, the scheme evaluates a larger subset of the joint state space in one iteration, and as a result it obtains a better joint policy for the next iteration compared with a smaller $D$ value.
However, the average runtime per iteration increases as $D$ increases and is $6.66, 14.57, 16.18,$ and $20.75$ seconds for the selected $D$ values, respectively.

Figure~\ref{fig:identical-interest-evaluation-80} compares the obtained stochastic policy for $D=80$ joint states against the optimal deterministic joint policy.
The optimal deterministic joint policy executes grid transitions that direct the agents to the stag's sub-state in the lowest number of time steps.
It can be seen from the figure that the stochastic joint policy gives a slightly better performance for the joint states $[20,4], [5, 12]$ and $[18, 14]$ in terms of the undiscounted cost return.
For the joint state $[11, 13]$ where each hunter starts in a cell neighboring the stag, the two policies achieve a similar performance.
The cost return difference is due to the increased KL cost when using the optimal deterministic joint policy.
Agents using the optimal deterministic joint policy transition to a selected joint state w.p. 1 which results in a higher KL cost compared with a stochastic joint policy.

For Figure~\ref{fig:identical-interest-norm}, we plot the $L_\infty$-Norm for the difference between iteration $k$ value function estimates and the value function from iteration $K=3000$.
As shown, the convergence rate to the iteration $K=3000$ value function estimate is faster with a larger $D$ value.

\section{Conclusion} \label{sec:conclusion}
We presented a synchronous and an asynchronous simulation-based optimistic policy iteration schemes for multi-agent MDPs with KL control costs that are run independently by each agent. The separation between control costs and joint state costs rendered the optimal joint policy to have a close-form solution in the form of a Boltzmann distribution that depends on the current value function estimate and uncontrolled transition probabilities. Given standard assumptions on the learning rates and the initial value function estimate, we showed the asymptotic convergence of both schemes to the optimal value function and an optimal joint policy. The convergence result applies to any simulation-based OPI scheme with finite and noisy rollout returns on any MDP. 
For different number of sampled joint states in an iteration, simulation results on a multi-agent MDP variant of the Stag-Hare game showed that the asynchronous scheme converges to a minimum cost return for the agents, with better performance than the optimal deterministic joint policy.

\newpage

\bibliographystyle{unsrtnat}  
\bibliography{references}

\end{document}